\documentclass[journal,onecolumn,compsoc]{IEEEtran}

%
\ifCLASSINFOpdf
\else
\fi

\sloppy
\usepackage{subfigure}
\usepackage{varwidth}
\usepackage[cmex10]{amsmath}
\usepackage{amsfonts}
\usepackage{amssymb}
\usepackage{amsthm}
\usepackage{graphicx}
\usepackage{verbatim}
\usepackage{color}
\usepackage[table]{xcolor}

\usepackage{url}
\usepackage{algorithm}
\usepackage[noend]{algpseudocode}


\newcommand{\lsize}{n}
\newcommand{\osize}{m}
\newcommand{\qsize}{q}
\newcommand{\usize}{u}
\newcommand{\vtsize}{v}
\newcommand{\testsize}{t}

\newcommand{\odsize}{d}
\newcommand{\qdsize}{r}

\newcommand{\kernelfk}{k}
\newcommand{\kernelfg}{g}

\newcommand{\lossfunction}{\mathcal{L}}

\newcommand{\regparam}{\lambda}

\newcommand{\idmatrix}{\bm{I}}

\newcommand{\transpose}{^\textnormal{T}}

\newcommand{\bm}[1]{\mathbf{#1}}

\newcommand{\objset}{D}
\newcommand{\taskset}{T}

\newcommand{\krf}{\bm{X}}

\newcommand{\eqn}[1]{\begin{align}#1\end{align}}

\newcommand{\mrows}{a}
\newcommand{\mcols}{b}
\newcommand{\nrows}{c}
\newcommand{\ncols}{d}
\newcommand{\ulen}{f}
\newcommand{\vlen}{e}

\newtheorem{lemma}{Lemma}

\newtheorem{theorem}{Theorem}
\newtheorem{definition}{Definition}

\newtheorem{remark}{Remark}

\newcommand{\tsize}{n}

\newcommand{\coeffs}{\bm{w}}
\newcommand{\dcoeffs}{\bm{a}}

\newcommand{\svecs}{\mathcal{S}}

\newcommand{\databvecs}{\mathcal{D}}
\newcommand{\taskbvecs}{\mathcal{E}}
\newcommand{\inputbvecs}{\mathcal{F}}

\newcommand{\funspace}{\mathcal{H}}
\DeclareMathOperator*{\argmin}{argmin}

\begin{document}

\title{Fast Kronecker product kernel methods via generalized vec trick}

\author{Antti Airola, Tapio Pahikkala
\thanks{A. Airola and T. Pahikkala are with the Department of Future Technologies,
University of Turku, Finland (e-mail: firstname.lastname@utu.fi).}}

%



\maketitle

\begin{abstract}
Kronecker product kernel provides the standard approach in the kernel methods literature for learning from graph data, where edges are labeled and both start and end vertices have their own feature representations. The methods allow generalization to such new edges, whose start and end vertices do not appear in the training data, a setting known as zero-shot or zero-data learning. Such a setting occurs in
numerous applications, including drug-target interaction prediction, collaborative filtering and information retrieval. Efficient training algorithms based on the so-called vec trick,
that makes use of the special structure of the Kronecker product, are known for the case where the training data is a complete bipartite graph. In this work we generalize these results to non-complete training graphs.  This allows us to derive a
general framework for training Kronecker product kernel methods, as specific examples we implement Kronecker ridge regression and support vector machine algorithms. Experimental results demonstrate that the proposed approach leads to accurate models, while allowing order of magnitude improvements in training and prediction time.

\end{abstract}


\begin{IEEEkeywords}
kernel methods, Kronecker product kernel, bipartite graph learning, ridge regression, support vector machine, transfer learning, zero-shot learning
\end{IEEEkeywords}

\section{Introduction}\label{introsection}

This work concerns the problem of learning supervised machine learning models from labeled bipartite graphs. 
Given a training set $(\bm{d}_i,\bm{t}_j,y_h)_{h=1}^\tsize$ of edges $(\bm{d}_i,\bm{t}_j)$, where $\bm{d}_i$ is the start vertex, and $\bm{t}_j$ the end vertex and $y_h$ the label, the goal is to learn to predict labels for new edges with unknown labels. We assume that both the start and end vertices have their own feature representations.  Further, we assume that the same vertices tend to appear as start or end vertices in multiple edges (for example, $(\bm{d}_t,\bm{t}_v)$, $(\bm{d}_u,\bm{t}_w)$, $(\bm{d}_t,\bm{t}_w)$ and $(\bm{d}_u,\bm{t}_v)$ might all belong to the same training set). This latter property is what differentiates this learning setting from the standard supervised learning setting, as the data violates the very basic i.i.d. assumption. This overlapping nature of the data has major implications both on the correct way to estimate the generalization performance of learned predictors, and on how to design efficient algorithms for learning from such data. 

As discussed by \cite{park2012flaws,Schrynemackers2013,pahikkala2015dti}, we may divide this type of learning problem into a number of distinct settings, depending whether the training and test graphs are vertex disjoint or not.
In this work, we assume that the training and test graphs are vertex disjoint, that is, neither the start vertex $\bm{d}$ nor the end vertex $\bm{t}$ of a test edge is part of any edge in the training set. This is in contrast to, for example, the less challenging setting usually considered in recommender systems literature, in which the training and test graphs are only edge disjoint. There the task is to impute missing labels to customers (start vertex) $\times$ products (end vertex) matrix, where each row and column already contains at least some known labels (see Section~\ref{sec:relwork} and \cite{menon2010loglinear} for more discussion). Following works such as \cite{palatucci2009zero,bernardino2015emabarrassing}, we use the term zero-shot learning to describe our setting, as it requires generalization to new graphs that are vertex disjoint with the training graph. This setting is illustrated in Figure~\ref{fig:setting}, where zero-shot learning corresponds to generalizing from a partially observed start vertices $\times$ end vertices label matrix to predicting unknown labels at the bottom right for edges connecting new start and end vertices that have no edges connected to them in the training set.

\begin{figure}
\includegraphics[width=0.99\linewidth]{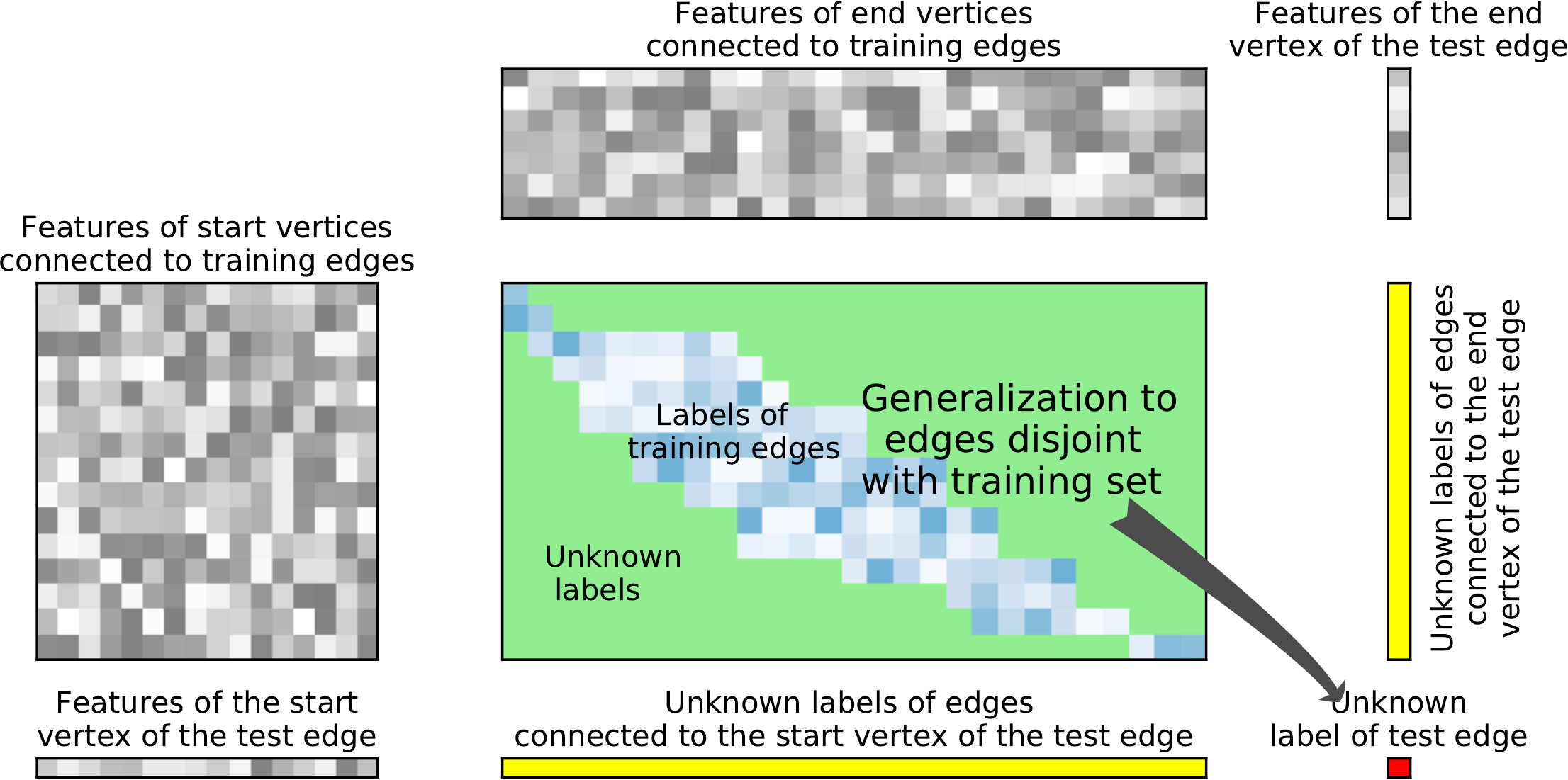}
\caption{Zero-shot learning. A predictor trained on edges and associated correct labels generalizes to such a test edge, for which both the start and end vertices are not part of the training set.}
\label{fig:setting}
\end{figure}

We consider the setting, where both the feature representations of the start and end vertices are defined implicitly via kernel functions \cite{muller2001introduction}. Further, following works such as \cite{basilico2004unifying,Benhur2005,Kashima2009linkprob, park2009pairwise,hue2010on,Raymond2010scalable,pahikkala2010conditional,waegeman2012learninggraded,pahikkala2013efficient,pahikkala2014twostep,bernardino2015emabarrassing}, we assume that the feature representation of an edge is defined as the product of the start and end vertex kernels, resulting in the so-called Kronecker product kernel. Kronecker product kernel methods can be trained by plugging this edge kernel to any existing kernel machine solver. However, for problems where the number of edges is large this is not feasible in practice, as typically the runtime and in many cases also the memory use of kernel solvers grows at least quadratically with respect to number of edges. 

In this work, we propose the first general Kronecker product kernel based learning algorithm for large-scale problems. This is realized by generalizing the computational short-cut implied by Roth's column lemma \cite{roth1934columnlemma} (often known as the vec-trick). As an example, we demonstrate in the experiments how the approach outperforms existing state-of-the-art SVM solver by several orders of magnitude when using the Gaussian kernel, the most well-known special case of the Kronecker product kernel. Further, we show that the proposed methods perform well in comparison to other types of graph prediction methods. This work extends our previous work \cite{pahikkala2014fastgradient}, that specifically considered the case of efficient ridge regression with Kronecker product kernel.



\subsection{Related work}\label{sec:relwork}

The bipartite graph learning problem considered in this work appears under various names in the machine learning literature, including link prediction \cite{Kashima2009linkprob}, relational learning \cite{waegeman2012learninggraded}, pair-input prediction  \cite{park2012flaws}, and dyadic prediction \cite{menon2010loglinear}. The setting provides a unifying framework for prominent machine learning application areas in biology and chemistry dealing with pairwise interaction predictions, such as drug-target \cite{pahikkala2015dti}, and protein-RNA interaction prediction. The setting is prominent also in collaborative filtering, where the central task is to predict ratings or rankings for (customer, product) -pairs (see e.g. \cite{basilico2004unifying,koren2009matrix,menon2010loglinear}). Also, in information retrieval the reranking of search engine results has been popularly cast as a bipartite graph ranking problem over (query, document) -pairs \cite{Liu2011letorir}. Many social network prediction tasks such as matching people to advertisements fall also naturally within the considered framework. Applications where the entities whose relations are predicted are of same type (i.e. protein-protein interactions, friendship in social network) can also be modeled as bipartite graph learning problems. In this case two vertices are used to represent each entity, one included in the set of start vertices and one in the set of end vertices.

In this work we consider the supervised graph learning problem, where the training edges are labeled. A limited range of graph learning problems may also be solved using unsupervised clustering methods \cite{peng2016unified,peng2016constructing}, for example by predicting that a relation holds between nodes assigned to the same cluster.

Kernel methods \cite{muller2001introduction} are among the most widely used approaches in machine learning, both due to their strong foundations in statistical learning theory, as well as their practical applicability for a wide range of learning problems. The methods represent the data using positive semidefinite kernel functions, that allow incorporating prior knowledge about the problem domain, provide implicit non-linear mappings, and may be learned directly from data (see e.g. \cite{Jayadeva2009kernel}). Recently, multi-task learning has been a topic of substantial research in the area of kernel methods (see e.g. \cite{bonilla2007taskspecific,jayadeva2008derivatives, alvarez2012kernels}).

The use of Kronecker product kernel for graph learning problems was originally proposed at the same time in several independent works for collaborative filtering \cite{basilico2004unifying} protein-protein interaction prediction \cite{Benhur2005} and entity resolution \cite{oyama2004}. Since then
the approach has become especially popular in predicting biological interactions (see e.g. \cite{vanLaarhoven2011,gonen2012predicting,pahikkala2015dti}), as well as a standard building block in more theoretical work concerning the
development of multi-task learning methods (see e.g. \cite{bonilla2007taskspecific,Hayashi2012self,alvarez2012kernels}).
Concerning specifically the zero-shot learning setting, Kronecker product kernel methods have been shown to outperform a variety of baseline
methods in areas such as recommender systems \cite{park2009pairwise}, drug-target prediction \cite{pahikkala2015dti} and
image categorization \cite{bernardino2015emabarrassing}. 

Theoretical  properties of the Kronecker product kernel were recently analyzed by \cite{waegeman2012learninggraded}. They showed that if the start and end vertex kernels have the so-called universality property (e.g. Gaussian kernel) \cite{steinwart2005consistency}, then the resulting Kronecker edge kernel is also universal, resulting in universal consistency when training kernel-based learning algorithms, such as support vector machines and ridge regression using the kernel.

Recently, it has been shown that the property that the same vertices tend to appear many times in the training data can be exploited in order to develop much faster Kronecker product kernel method training algorithms. 
Several efficient machine learning algorithms have been proposed for the special case, where the training sets consists of a complete bipartite graph, meaning that each possible start-end vertex pair appears exactly once, and a ridge regression loss is minimized. Specifically \cite{Raymond2010scalable,pahikkala2010conditional,pahikkala2013efficient,pahikkala2014twostep,bernardino2015emabarrassing} derive closed form solutions based on Kronecker algebraic optimization (see also \cite{martin2006shiftedkron} for the basic mathematical results underlying these studies). Further, iterative methods based on Kronecker product kernel matrix - vector multiplications, have been proposed (see e.g. \cite{Kashima2009linkprob,pahikkala2010conditional,pahikkala2013efficient}). 

However, the assumption about the training set graph being complete can be considered a major limitation on the applicability of these methods. As far as we are aware, similar efficient training algorithms as ours for regularized kernel methods have not been previously proposed for the case, where the training graph is sparse in the sense that only a subset of the possible edges appear in the training set. 

Much of the work done in multi-task learning can be considered to fall under the bipartite graph prediction, and even zero-shot learning framework \cite{bonilla2007taskspecific,larochelle2008zero,pahikkala2010conditional,Hayashi2012self,pahikkala2014twostep,bernardino2015emabarrassing,Schafer2015}. For example, in multi-label data one might have as training data images and labels describing the properties of the images (e.g. food, person, outside). In a traditional multi-label learning setting only the images have features describing them, and we assume that the set of possible labels is fixed in advance, and that examples of each label must appear in the training set. However, if the labels have also features describing then, we may relax these assumptions by casting the task as a graph prediction problem over (data point, label) -edges (e.g. predict 1 if label $\bm{t}_i$ should be assigned to data point $\bm{d}_j$ and -1 if not, or rank a set of candidate labels from best matching to worst). This allows predictions even for new data points and labels both not part of the training set.

There exist two related yet distinct graph learning problems, that have been studied in-depth in the machine learning literature. Let $\objset$ and $\taskset$ denote, respectively, the sets of start and end vertices connected to the training edges, and $(\bm{d},\bm{t})$ a new edge, for which the correct label needs to be predicted. First, if $\bm{d}\in\objset$ and $\bm{t}\in\taskset$, the problem becomes that of matrix completion (predicting the the unknown labels within the center rectangle in Figure~\ref{fig:setting}). Recently this setting has been considered especially in the recommender systems literature, where matrix factorization methods have become the de-facto standard approach for solving these problems (see e.g. \cite{koren2009matrix}). The latent feature representations learned by the factorization methods, however, only generalize to edges that have both vertices connected to at least one edge in the training set. The second setting corresponds to the situation where $\bm{d}\in\objset$ and $\bm{t}\notin\taskset$, or alternatively $\bm{d}\notin\objset$ and $\bm{t}\in\taskset$ (predicting the block on the right or below in Figure~\ref{fig:setting}). Here predictions are needed only for new rows or columns of the label matrix, but not both simultaneously. The problem may be solved by training a standard supervised learning method for each row or column separately, or by using multi-target learning methods that also aim to utilize correlations between similar rows or columns \cite{evgeniou2005learning}.

While simultaneous use of both start and end vertex features can and has been integrated to learning also in these two related settings, they are not essential for generalization, and can even be harmful (so called negative transfer, see e.g. \cite{pahikkala2014twostep,pahikkala2015dti}). The learning algorithms considered in this work are applicable in these related settings, provided that both start and end vertex features are available. However, simpler methods that do not need both of these information sources often provide a competitive alternative. Thus we do not consider further these two settings in this work, as these have been already quite thoroughly explored in previous literature.


\section{Generalized vec-trick}

In this section, we introduce the novel generalized Vec-trick algorithm (Algorithm~\ref{alg:bgkbvdetailed}) for computing such Kronecker products, where a submatrix of a Kronecker product matrix $(\bm{M}\otimes\bm{N})$ is multiplied to a vector.
The algorithm forms the basic building block for developing computationally efficient training methods for Kronecker product kernel methods.

First, we introduce some notation. By $[n]$, where $n\in\mathbb{N}$, we denote the index set $\{1,...,n\}$. By $\bm{A}\in\mathbb{R}^{a\times b}$ we denote an $a\times b$ matrix, and by $A_{i,j}$ the $i,j$:th element of this matrix. By $\textnormal{vec}(\bm{A})$ we denote the vectorization of $\bm{A}$, which is the $ab\times 1$ column vector obtained by stacking all the columns of $\bm{A}$ in order starting from the first column. $\bm{A}\otimes\bm{C}$ denotes the Kronecker product of $\bm{A}$ and $\bm{C}$. By $\bm{a}\in\mathbb{R}^{c}$ we denote a column vector of size $c\times 1$, and by $a_i$ its $i$:th element.

There exist several studies in the machine learning literature in which the systems of linear equations involving Kronecker products have been accelerated with the so-called ``vec-trick''. This is characterized by the following result known as the Roth's column lemma in the Kronecker product algebra:
\begin{lemma}[Roth's column lemma; "Vec trick" \cite{roth1934columnlemma}]\label{veclemma}
Let $\bm{M}\in\mathbb{R}^{a\times b}$, $\bm{Q}\in\mathbb{R}^{b\times c}$, and $\bm{N}\in\mathbb{R}^{c\times d}$ be matrices. Then,
\eqn{\label{veclemmaeq}
(\bm{N}\transpose\otimes\bm{M})\textnormal{vec}(\bm{Q})=\textnormal{vec}(\bm{M}\bm{Q}\bm{N}).
}
\end{lemma}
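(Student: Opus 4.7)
The plan is to verify the identity by comparing the two sides block by block, exploiting the fact that $\textnormal{vec}(\cdot)$ stacks columns and that the Kronecker product $\bm{N}\transpose\otimes\bm{M}$ has a natural block structure whose $(j,k)$-block is $(N\transpose)_{j,k}\bm{M} = N_{k,j}\bm{M}$. Since both sides of \eqref{veclemmaeq} are vectors of length $ad$, it suffices to show that their $j$-th block of size $a$ agrees for every $j\in[d]$.

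First I would denote by $\bm{q}_k$ the $k$-th column of $\bm{Q}$, so that $\textnormal{vec}(\bm{Q})$ is the concatenation $(\bm{q}_1\transpose,\ldots,\bm{q}_c\transpose)\transpose$. Next I would compute the $j$-th column of $\bm{M}\bm{Q}\bm{N}$ directly: using $\bm{N}\bm{e}_j = \sum_{k=1}^{c} N_{k,j}\bm{e}_k$, this column equals $\bm{M}\bm{Q}\bm{N}\bm{e}_j = \sum_{k=1}^{c} N_{k,j}\,\bm{M}\bm{q}_k$. This expression is by definition the $j$-th block of $\textnormal{vec}(\bm{M}\bm{Q}\bm{N})$.

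Then I would compute the same $j$-th block of $(\bm{N}\transpose\otimes\bm{M})\textnormal{vec}(\bm{Q})$ using the block definition of the Kronecker product. Partitioning $(\bm{N}\transpose\otimes\bm{M})$ into $d\times c$ blocks of size $a\times b$, the $(j,k)$-block is $(N\transpose)_{j,k}\bm{M} = N_{k,j}\bm{M}$. Multiplying this block matrix by the column-stacked vector $\textnormal{vec}(\bm{Q})$ and collecting the $j$-th block yields $\sum_{k=1}^{c} N_{k,j}\,\bm{M}\bm{q}_k$, which matches the expression obtained in the previous step. Since $j\in[d]$ was arbitrary, the two vectors are equal.

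I do not expect any genuine obstacle; the only thing that requires care is to keep the transpose bookkeeping straight, namely that the $(j,k)$-entry of $\bm{N}\transpose$ is $N_{k,j}$, which is exactly what makes the factor in the block of the Kronecker product match the scalar coefficient coming from expanding $\bm{N}\bm{e}_j$. The dimension check $b\times c$ for $\bm{Q}$ against the block sizes $a\times b$ of $\bm{M}$ and the $d\times c$ block layout of $\bm{N}\transpose\otimes\bm{M}$ confirms that the products are well defined.
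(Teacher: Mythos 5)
Your argument is correct: the blockwise comparison, using that the $(j,k)$-block of $\bm{N}\transpose\otimes\bm{M}$ is $N_{k,j}\bm{M}$ and that the $j$-th column of $\bm{M}\bm{Q}\bm{N}$ expands as $\sum_{k=1}^{c}N_{k,j}\bm{M}\bm{q}_k$, is the standard proof of Roth's column lemma, and your dimension and transpose bookkeeping is right. Note that the paper itself gives no proof of this lemma --- it is stated as a classical result with a citation to Roth (1934) --- so there is no in-paper argument to compare against; your verification is exactly what one would supply if a proof were required.
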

\noindent It is obvious that the right hand side of (\ref{veclemmaeq}) is considerably faster to compute than the left hand side, because it avoids the direct computation of the large Kronecker product.

\begin{algorithm}[t]
  \begin{algorithmic}[1]
    \Require $\bm{M}\in\mathbb{R}^{\mrows\times\mcols},\bm{N}\in\mathbb{R}^{\nrows\times\ncols},\bm{v}\in\mathbb{R}^{\vlen},\bm{p}\in[\mrows]^\ulen,\bm{q}\in[\nrows]^\ulen,\bm{r}\in[\mcols]^\vlen,\bm{t}\in[\ncols]^\vlen$
    \If{$\mrows\vlen+\ncols\ulen<\nrows\vlen+\mcols\ulen$}
      \State $\bm{T}\gets\bm{0}\in\mathbb{R}^{\ncols\times\mrows}$
      \For {$h=1,\ldots,\vlen$}
        \State $i,j\gets r_h,t_h$
        \For {$k=1,\ldots,\mrows$}
          \State $T_{j,k}\gets T_{j,k}+v_{h}M_{k, i}$
        \EndFor
      \EndFor
      \State $\bm{u}\gets\bm{0}\in\mathbb{R}^{\ulen}$
      \For {$h=1,\ldots,\ulen$}
        \State $i,j\gets p_h,q_h$
        \For {$k=1,\ldots,\ncols$}
          \State $u_h\gets u_h+N_{j,k}T_{k,i}$
        \EndFor
      \EndFor
    \Else
      \State $\bm{S}\gets\bm{0}\in\mathbb{R}^{\nrows\times\mcols}$
      \For {$h=1,\ldots,\vlen$}
        \State $i,j\gets r_h,t_h$
        \For {$k=1,\ldots,\nrows$}
          \State $S_{k,i}\gets S_{k,i}+v_{h}N_{k,j}$
        \EndFor
      \EndFor
      \State $\bm{u}\gets\bm{0}\in\mathbb{R}^{\ulen}$
      \For {$h=1,\ldots,\ulen$}
        \State $i,j\gets p_h,q_h$
        \For {$k=1,\ldots,\mcols$}
          \State $u_h\gets u_h+S_{j,k}M_{i,k}$
        \EndFor
      \EndFor
    \EndIf
    \State \textbf{return} $\bm{u}$
  \caption{Generalized Vec trick: $\bm{u}\gets\bm{R}(\bm{M}\otimes\bm{N})\bm{C}\transpose\bm{v}$}
  \label{alg:bgkbvdetailed}
  \end{algorithmic}
\end{algorithm}

We next shift our consideration to matrices that correspond to a submatrix of a Kronecker product matrix. First, we introduce some further notation. To index certain of rows or columns of a matrix, the following construction may be used:
\begin{definition}[Index matrix]
Let $\bm{M}\in\mathbb{R}^{\mrows\times\mcols}$ and let $\bm{s}=(s_1,\ldots,s_\ulen)\transpose\in[\mrows]^\ulen$ be a sequence of $\ulen$ row indices of $\bm{M}$. We say that
\[
\bm{S}=\left(
\begin{array}{c}
\bm{e}_{s_1}\transpose\\
\vdots\\
\bm{e}_{s_\ulen}\transpose
\end{array}\right)\;,
\]
where $\bm{e}_i$ is the $i$th standard basis vector of $\mathbb{R}^a$, is a row indexing matrix for $\bm{M}$ determined by $\bm{s}$. The column indexing matrices are defined analogously.
\end{definition}

The above type of ordinary indexing matrices can of course be used with Kronecker product matrices as well. However, the forthcoming computational complexity considerations require a more detailed construction in which we refer to the indices of the factor matrices rather than to those of the product matrix. This is characterized by the following lemma which follows directly from the properties of the Kronecker product:
\begin{lemma}[Kronecker product index matrix]
\label{def:kronsampler}
Let $\bm{M}\in\mathbb{R}^{\mrows\times\mcols}$, $\bm{N}\in\mathbb{R}^{\nrows\times\ncols}$ and let $\bm{S}$ be an index matrix for the Kronecker product $\bm{M}\otimes\bm{N}$. Then, $\bm{S}$ can be expressed as
\[
\bm{S}=\left(
\begin{array}{c}
\bm{e}_{(p_1-1)\nrows+q_1}\transpose\\
\vdots\\
\bm{e}_{(p_\ulen-1)\nrows+q_\ulen}\transpose
\end{array}\right)\;,
\]
where $\bm{p}=(p_1,\ldots,p_\ulen)\transpose\in[\mrows]^\ulen$ and $\bm{q}=(q_1,\ldots,q_\ulen)\transpose\in[\nrows]^\ulen$ are sequences of row indices of $\bm{M}$ and $\bm{N}$, respectively. The entries of $\bm{p}$ and $\bm{q}$ are given as 
\begin{align}
p&=\lfloor (i-1)/\nrows\rfloor+1\phantom{WW}q=(i-1)\mod \nrows+1\;,\label{quotientremainder}
\end{align}
where $i\in[ac]$ denotes a row index of $\bm{M}\otimes\bm{N}$, and $p\in[a]$ and $q\in[c]$ map to the corresponding rows in $\bm{M}$ and $\bm{N}$, respectively. The column indexing matrices are defined analogously.
\end{lemma}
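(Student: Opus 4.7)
The plan is to reduce the claim to the basic bilinear structure of the Kronecker product, specifically the identity $\bm{e}_p\transpose \otimes \bm{e}_q\transpose = \bm{e}_{(p-1)\nrows + q}\transpose$ for standard basis vectors $\bm{e}_p \in \mathbb{R}^{\mrows}$ and $\bm{e}_q \in \mathbb{R}^{\nrows}$. This identity is immediate from the block definition of $\bm{M}\otimes\bm{N}$ as the $\mrows\nrows \times \mcols\ncols$ matrix whose $(p,r)$-block is $M_{p,r}\bm{N}$: the rows of $\bm{M}\otimes\bm{N}$ form $\mrows$ consecutive groups of $\nrows$ rows, and the $q$-th row inside the $p$-th group is globally the $((p-1)\nrows + q)$-th row.

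First I would verify the index conversion in (\ref{quotientremainder}). For any $i \in [\mrows\nrows]$, Euclidean division of $i-1$ by $\nrows$ yields a unique quotient $p-1 \in \{0,\ldots,\mrows-1\}$ and remainder $q-1 \in \{0,\ldots,\nrows-1\}$ satisfying $i-1 = (p-1)\nrows + (q-1)$, which gives exactly the formulas in (\ref{quotientremainder}) and establishes a bijection between $[\mrows\nrows]$ and $[\mrows]\times[\nrows]$. Next I would apply the mixed-product property $(\bm{A}\otimes\bm{B})(\bm{C}\otimes\bm{D}) = (\bm{A}\bm{C})\otimes(\bm{B}\bm{D})$ to get
\begin{align*}
(\bm{e}_p\transpose \otimes \bm{e}_q\transpose)(\bm{M}\otimes\bm{N}) = (\bm{e}_p\transpose\bm{M})\otimes(\bm{e}_q\transpose\bm{N}),
\end{align*}
i.e., the Kronecker product of the $p$-th row of $\bm{M}$ with the $q$-th row of $\bm{N}$; by the block description above, this is exactly the $((p-1)\nrows + q)$-th row of $\bm{M}\otimes\bm{N}$. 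Hence, as a row selector acting on $\bm{M}\otimes\bm{N}$, we may identify $\bm{e}_{(p-1)\nrows + q}\transpose$ with $\bm{e}_p\transpose \otimes \bm{e}_q\transpose$.

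With these two pieces in hand, the lemma follows by rewriting each row $\bm{e}_{s_h}\transpose$ of the index matrix $\bm{S}$ for $\bm{M}\otimes\bm{N}$ as $\bm{e}_{(p_h-1)\nrows + q_h}\transpose$, where $(p_h, q_h)$ is recovered from $s_h$ via (\ref{quotientremainder}), yielding precisely the form in the statement. The analogous claim for column indexing matrices is symmetric: it comes from the same argument with column basis vectors and the parallel identity $\bm{e}_r \otimes \bm{e}_t = \bm{e}_{(r-1)\ncols + t}$ in $\mathbb{R}^{\mcols\ncols}$. The main obstacle I anticipate is purely notational, namely keeping straight whether an element of $[\mrows\nrows]$ is being viewed as a global row index of $\bm{M}\otimes\bm{N}$ or as a pair in $[\mrows]\times[\nrows]$, and making sure the off-by-one shifts in the quotient and remainder formulas stay consistent; there is no substantive mathematical difficulty, since the lemma is essentially a restatement of the way Kronecker product rows are naturally indexed by a pair of row indices of the two factors.
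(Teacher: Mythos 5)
Your proof is correct and follows essentially the same route as the paper, which simply observes that the claim is an immediate consequence of the block definition of the Kronecker product together with the quotient--remainder correspondence in (\ref{quotientremainder}). You merely spell out the details the paper leaves implicit (the Euclidean-division bijection and the identification $\bm{e}_p\transpose\otimes\bm{e}_q\transpose=\bm{e}_{(p-1)\nrows+q}\transpose$ via the mixed-product property), which is a valid and slightly more explicit rendering of the same argument.
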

\begin{proof}
The claim is an immediate consequence of the definition of the Kronecker product, where the relationship between the row indices of $\bm{M}$ and $\bm{N}$ with those of their Kronecker product is exactly the one given in (\ref{quotientremainder}).
\end{proof}
\begin{theorem}\label{mainproposition}
Let $\bm{M}\in\mathbb{R}^{\mrows\times\mcols},\bm{N}\in\mathbb{R}^{\nrows\times\ncols}$, and let $\bm{R}\in\{0,1\}^{\ulen\times\mrows\nrows}$ and $\bm{C}\in\{0,1\}^{\vlen\times\mcols\ncols}$ to be, respectively, a row index matrix and a column index matrix of $\bm{M}\otimes\bm{N}$, such that $\bm{R}$ is determined by the sequences $\bm{p}=(p_1,\ldots,p_\ulen)\transpose\in[\mrows]^\ulen$ and $\bm{q}=(q_1,\ldots,q_\ulen)\transpose\in[\nrows]^\ulen$, and $\bm{C}$ by  $\bm{r}=(r_1,\ldots,r_\vlen)\transpose\in[\mcols]^\vlen$ and $\bm{t}=(t_1,\ldots,t_\vlen)\transpose\in[\ncols]^\vlen$. Further, we assume that the sequences $\bm{p}$, $\bm{q}$, $\bm{r}$ and $\bm{t}$ when considered as mappings, are all surjective. Further, let $\bm{v}\in\mathbb{R}^{\vlen}$.
The product
\eqn{\label{vecpropeq}
\bm{R}(\bm{M}\otimes\bm{N})\bm{C}\transpose\bm{v}
}
can be computed in $O(\min(\mrows\vlen+\ncols\ulen,\nrows\vlen+\mcols\ulen))$ time using the generalized Vec trick (Algorithm~\ref{alg:bgkbvdetailed}).
\end{theorem}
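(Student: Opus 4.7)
My plan is to (i) recast $\bm{C}\transpose\bm{v}$ as the vectorization of a scatter matrix, (ii) apply Roth's column lemma (Lemma~\ref{veclemma}) to reduce $\bm{R}(\bm{M}\otimes\bm{N})\bm{C}\transpose\bm{v}$ to reading off selected entries of a triple product $\bm{N}\bm{V}\bm{M}\transpose$, and (iii) check that each of the two branches of Algorithm~\ref{alg:bgkbvdetailed} evaluates this triple product in a specific associative order whose arithmetic cost matches the corresponding term in the stated bound.

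For step (i), I would first use Lemma~\ref{def:kronsampler} to translate $(\bm{r},\bm{t})$ into the rows $\bm{e}_{(r_h-1)\ncols+t_h}\transpose$ of $\bm{C}$. Then $\bm{C}\transpose\bm{v}=\sum_{h=1}^{\vlen} v_h \bm{e}_{(r_h-1)\ncols+t_h}$, and defining $\bm{V}\in\mathbb{R}^{\ncols\times\mcols}$ by $V_{j,i}=\sum_{h:\,r_h=i,\,t_h=j} v_h$, the column-major layout of $\textnormal{vec}$ gives $\bm{C}\transpose\bm{v}=\textnormal{vec}(\bm{V})$. An analogous computation shows that, for any $\bm{W}\in\mathbb{R}^{\nrows\times\mrows}$, $(\bm{R}\,\textnormal{vec}(\bm{W}))_h=W_{q_h,p_h}$. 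For step (ii), Lemma~\ref{veclemma} applied with the substitution $\bm{N}\transpose\mapsto\bm{M}$ and $\bm{M}\mapsto\bm{N}$ yields $(\bm{M}\otimes\bm{N})\textnormal{vec}(\bm{V})=\textnormal{vec}(\bm{N}\bm{V}\bm{M}\transpose)$, so the target output is simply the vector whose $h$-th entry is $(\bm{N}\bm{V}\bm{M}\transpose)_{q_h,p_h}$.

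For step (iii), I would show that both branches implement this triple product using a different parenthesization without ever materializing $\bm{V}$. In the first branch, the first double loop accumulates $\bm{T}\in\mathbb{R}^{\ncols\times\mrows}$ by incrementing $T_{t_h,k}$ with $v_h M_{k,r_h}$, which by the definition of $\bm{V}$ equals $T_{j,k}=\sum_i V_{j,i}M_{k,i}=(\bm{V}\bm{M}\transpose)_{j,k}$. The second double loop sets $u_h=\sum_k N_{q_h,k}T_{k,p_h}=(\bm{N}\bm{T})_{q_h,p_h}=(\bm{N}\bm{V}\bm{M}\transpose)_{q_h,p_h}$, matching step (ii). The second branch instead assembles $\bm{S}=\bm{N}\bm{V}\in\mathbb{R}^{\nrows\times\mcols}$ and reads off $(\bm{S}\bm{M}\transpose)_{q_h,p_h}$. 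Counting iterations in each branch gives $O(\mrows\vlen+\ncols\ulen)$ and $O(\nrows\vlen+\mcols\ulen)$ respectively, and the \textbf{if} test picks the cheaper of the two.

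The one nontrivial point, and the main obstacle to state cleanly, is to justify why the triple product $\bm{N}\bm{V}\bm{M}\transpose$ can be produced within the advertised budget rather than the naive $O(\mrows\nrows\mcols\ncols)$ cost of Kronecker multiplication, or indeed the cost of a dense triple product. The key observation is that $\bm{V}$ is specified by only $\vlen$ scattered scalar increments, so the first loop can fold each increment directly into $\bm{V}\bm{M}\transpose$ (respectively $\bm{N}\bm{V}$) at cost $\mrows$ (respectively $\nrows$) per scatter, while the second loop computes only the $\ulen$ entries actually demanded by $\bm{R}$, each as a single inner product of length $\ncols$ (respectively $\mcols$). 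It is this sparse scatter/gather viewpoint, together with the surjectivity of $\bm{p},\bm{q},\bm{r},\bm{t}$ ensuring that no shorter dimension can be exploited, that yields the stated $O(\min(\mrows\vlen+\ncols\ulen,\nrows\vlen+\mcols\ulen))$ complexity.
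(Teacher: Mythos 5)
Your proposal is correct and follows essentially the same route as the paper's own proof: write $\bm{C}\transpose\bm{v}=\textnormal{vec}(\bm{V})$, apply Roth's column lemma to reduce the product to $\bm{R}\,\textnormal{vec}(\bm{N}\bm{V}\bm{M}\transpose)$, and evaluate the triple product in whichever of the two associative orders is cheaper, exploiting that $\bm{V}$ has at most $\vlen$ nonzero entries and only $\ulen$ output entries are gathered. Your step (iii) is in fact more explicit than the paper's argument, which merely sketches the scatter/gather idea without verifying the loop bodies of Algorithm~\ref{alg:bgkbvdetailed} entry by entry.
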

\begin{proof}
Since the sequences $\bm{p}$, $\bm{q}$, $\bm{r}$ and $\bm{t}$,  determining $\bm{R}$ and $\bm{C}$ are all surjective on their co-domains, we imply that $\max(\mrows,\nrows)\leq\ulen$ and $\max(\mcols,\ncols)\leq\vlen$.
Let $\bm{V}\in{R}^{\ncols\times\mcols}$ be a matrix such that $\textnormal{vec}(\bm{V})=\bm{C}\transpose\bm{v}$. Then, according to Lemma~\ref{veclemma}, (\ref{vecpropeq}) can be rewritten as
\[
\bm{R}\textnormal{vec}(\bm{N}\bm{V}\bm{M}\transpose)\;,
\]
which does not involve a Kronecker product. Calculating the right hand side can be started by first computing either $\bm{S}=\bm{N}\bm{V}$ or $\bm{T}=\bm{V}\bm{M}\transpose$ requiring $O(\nrows\vlen)$ and $O(\mrows\vlen)$ time, respectively, due to $\bm{V}$ having only at most $\vlen$ nonzero entries. Then, each of the $\ulen$ elements of the product can be computed either by calculating the inner product between a row of $\bm{S}$ and a row of $\bm{M}$ or between a row of $\bm{N}$ and a column of $\bm{T}$, depending whether $\bm{S}$ or $\bm{T}$ was computed. The former inner products require $\mcols$ and the latter $\ncols$ flops, and hence the overall complexity is $O(\min(\mrows\vlen+\ncols\ulen,\nrows\vlen+\mcols\ulen))$. The pseudocode for the algorithm following the ideas of the proof is presented in Algorithm~\ref{alg:bgkbvdetailed}. We note that the index matrices $\bm{R}$ and $\bm{C}$ are not explicitly given as input to the algorithm, because the elements of these sparse matrices are encoded in the vectors $\bm{p}$, $\bm{q}$, $\bm{r}$ and $\bm{t}$.
\end{proof}
\begin{remark}[Vec trick as special case]
If $\bm{R}=\bm{C}=\bm{I}$, the generalized vec trick reduces to the standard vec trick, and the complexity of Algorithm~\ref{alg:bgkbvdetailed} is $O(\min(\mrows\mcols\ncols+\mrows\nrows\ncols, \mcols\nrows\ncols+\mrows\mcols\nrows))$, the same as for the standard vec trick in Lemma~\ref{veclemmaeq}.
\end{remark}


\section{General framework for Kronecker product kernel methods}

In this section, we connect the above abstract considerations with practical learning problems with bipartite graph data. We start by defining some notation which will be used through the forthcoming considerations. As observed from Figure~\ref{fig:setting}, the relevant data dimensions are the sizes of the two data matrices and the number of edges with known labels. The notations are summarized in Table~\ref{tb:notation}.

\begin{table}[t]
\caption{Notation concerning training set dimensions.}
\centering
\begin{center}
{\setlength{\tabcolsep}{3pt}
\begin{tabular}{|ll|}
\hline
$\lsize$& number of labeled edges in training set\\
$\osize$& number of unique start vertices connected to training edges\\
$\qsize$& number of unique end vertices connected to training edges\\
$\odsize$& number of start vertex features\\
$\qdsize$& number of end vertex features\\
\hline
\end{tabular}
}
\end{center}
\label{tb:notation}
\end{table}


The training set consists of a labeled sequence of edges $(\bm{d}_{r_h},\bm{t}_{s_h},y_h)_{h=1}^\lsize\in(\mathcal{D}\times\mathcal{T}\times\mathcal{Y})^\lsize$ of a labeled bipartite graph, each edge consisting of a start vertex $\bm{d}_{r_h}\in\mathcal{D}$, end vertex $\bm{t}_{s_h}\in\mathcal{T}$ and an associated label $y_h$, where $\mathcal{D}$ and $\mathcal{T}$ are the spaces of start and end vertices, respectively, and $\mathcal{Y}$ is the label space ($\mathcal{Y}=\mathbb{R}$ for regression and $\mathcal{Y}=\{-1,1\}$ for binary classification). Moreover, let $\{\bm{d}_i\}_{i=1}^\osize\subset\mathcal{D}$ and $\{\bm{t}_i\}_{i=1}^\qsize\subset\mathcal{T}$ denote the sets of such start vertices and end vertices, respectively, that are encountered in the training set as part of at least one edge. Further, let $\bm{r}=(r_1,\ldots,r_\lsize)\transpose\in[\qsize]^\lsize$ and $\bm{s}=(s_1,\ldots,s_\lsize)\transpose\in[\osize]^\lsize$ be index sequences that map the training edges to their corresponding start and end vertices, respectively. Looking at Figure~\ref{fig:setting}, one may think of $r_i$ and $s_i$ as row and column indices for the known entries in the start vertices times end vertices matrix, that correspond to the training data.

Let us assume a randomly drawn new sequence of edges $(\bm{d}_i,\bm{t}_j)_{h=1}^\testsize\in(\mathcal{D}\times\mathcal{T})^\testsize$, where $1<i<\usize$, $1<j<\vtsize$, and the labels are unknown. Our aim is to learn from the training set a prediction function $f:\mathcal{D}\times\mathcal{T}\rightarrow\mathcal{Y}$, such that can correctly predict the labels of such edges. We further assume that the graphs corresponding to the train and test data are completely disconnected, that is, the sets of their start vertices are mutually disjoint and so are the sets of their end vertices. This is the main difference of our setting to the types of problems solved typically for example within the recommender systems literature, rather our setting corresponds to the so-called zero-shot, or cold-start problem where test vertices do not appear in the training set.

Let $\kernelfk : \mathcal{D} \times \mathcal{D} \to [0,\infty)$ and $\kernelfg : \mathcal{T} \times \mathcal{T} \to [0,\infty)$ denote positive semi-definite kernel functions \cite{muller2001introduction} defined for the start and end vertices respectively. Further, the Kronecker product kernel $\kernelfk^{\otimes}(\bm{d}, \bm{t}, \bm{d}', \bm{t}')= \kernelfk(\bm{d}, \bm{d}')\kernelfg(\bm{t},\bm{t}')$ for the edges is defined as the product of these two base kernels. Let $K_{i,j}=\kernelfk(\bm{d}_i, \bm{d}_j)$ and $G_{i,j}= \kernelfg(\bm{t}_i, \bm{t}_j)$ denote the start and end vertex kernel matrices for the training data, respectively. Further, let $\bm{R}\in\{0,1\}^{\lsize\times\osize\qsize}$ be the Kronecker product index matrix determined by the start and end vertex index sequences $\bm{r}$ and $\bm{s}$, as defined in Lemma~\ref{def:kronsampler}. Then $\bm{R}(\bm{G}\otimes\bm{K})\bm{R}\transpose$ is the Kronecker product kernel matrix containing the kernel evaluations corresponding to the training edges. Note that the methods to be introduced will never explicitly form this prohibitively large Kronecker product matrix, rather kernel matrix multiplications are implemented with Algorithm~\ref{alg:bgkbvdetailed},
that requires as input $\bm{G}$ and $\bm{K}$, as well as the index sequences $\bm{r}$ and $\bm{s}$ that implicitly define $\bm{R}$.

We consider the regularized risk minimization problem
\begin{equation}\label{regalgorithm}
f^* = \argmin_{f \in\funspace} J(f) 
\end{equation}
with
\begin{equation*}
J(f) = 
\lossfunction(\bm{p},\bm{y})
+\frac{\regparam}{2}\Arrowvert f \Arrowvert_{\funspace}^2,
\end{equation*}
where $\bm{p}\in\mathbb{R}^\lsize$ and $\bm{y}\in\mathbb{R}^\lsize$ are the predicted and correct outputs for the training set,
$\lossfunction$ a convex nonnegative loss function and $\regparam>0$ a regularization parameter. Choosing as $\funspace$ the reproducing kernel Hilbert space defined by $\kernelfk^{\otimes}$, the generalized representer theorem \cite{scholkopf2001representer} guarantees that the optimal solutions are of the form
\begin{equation*}
f^*(\bm{d},\bm{t})= \sum_{i=1}^{n}a_i k(\bm{d}_{r_i},\bm{d})g(\bm{t}_{s_i},\bm{t}),
\end{equation*}
where we call $\bm{a}\in\mathbb{R}^{\lsize}$ the vector of dual coefficients.

Now let us consider the special case, where the start and end vertex spaces are real vector spaces, that is, $\mathcal{D}=\mathbb{R}^{\odsize}$ and $\mathcal{T}=\mathbb{R}^{\qdsize}$, and hence both the start and end vertices have a finite dimensional feature representation. Further, if the start and end vertex kernels are linear, the Kronecker product kernel can be written open as the inner product $\kernelfk^{\otimes}(\bm{d}, \bm{t}, \bm{d}', \bm{t}')=\langle \bm{d}\otimes\bm{t}, \bm{d}'\otimes\bm{t}'\rangle$, that is the edges have an explicit feature representation with respect to this kernel as the Kronecker product of the two feature vectors. Let $\bm{D}\in\mathbb{R}^{\osize\times\odsize}$ and $\bm{T}\in\mathbb{R}^{\qsize\times\qdsize}$, respectively, contain the feature representations of the training set start and end vertices. Now the joint Kronecker feature representation for the training data can be expressed as $\krf=\bm{R}(\bm{T}\otimes\bm{D})$. In this case we can equivalently define the regularized risk minimizer as
\begin{equation*}
f^*(\bm{d},\bm{t}) = \langle \bm{d} \otimes \bm{t}, \sum_{i=1}^{n}a_i \bm{d}_{r_i}\otimes\bm{t}_{s_i} \rangle = \langle \bm{d} \otimes \bm{t}, \bm{w}\rangle, 
\end{equation*}
where we call $\bm{w}\in\mathbb{R}^{dr}$ the vector of primal coefficients.

\subsection{Efficient prediction}

Next, we consider how predictions can be efficiently computed for graph data both with dual and primal predictors. This operation is necessary both during training when training set predictions are needed to evaluate the quality of the current predictor, as well as when applying the final trained predictor on new edges. Let us assume a new sequence of edges $(\bm{d}_i,\bm{t}_j)_{h=1}^\testsize\in(\mathcal{D}\times\mathcal{T})^\testsize$, where $1<i<\usize$, $1<j<\vtsize$, and $\bm{d}_i$ and $\bm{t}_j$ may or may not overlap with the training set start and end vertices.
Let $\widehat{\bm{K}}\in\mathbb{R}^{\usize\times\osize}$ and $\widehat{\bm{G}}\in\mathbb{R}^{\vtsize\times\qsize}$ contain the kernel evaluations between the training start and end vertices, and the new start and end vertices, respectively. Further, the Kronecker product index matrix $\widehat{\bm{R}}\in\{0,1\}^{\testsize\times\usize\vtsize}$ encodes those new edges, for which the predictions are needed for.

For the dual model, the predictions can be computed as
\begin{equation*}
\widehat{\bm{R}}(\widehat{\bm{G}}\otimes\widehat{\bm{K}})\bm{R}\transpose\bm{a}
\end{equation*}
resulting in a computational complexity of $O(\min(\vtsize\lsize+\osize\testsize,\usize\lsize+\qsize\testsize))$. Further, we note that for sparse models where a large portion of the $a_i$ coefficients have value $0$ (most notably, support vector machine predictors), we can further substantially speed up prediction by removing these entries from $a$ and $\bm{R}$ and correspondingly replace the term $\lsize$ in the complexity with the number of non-zero coefficients. In this case, the prediction complexity will be 
\begin{equation}
O(\min(\vtsize \Arrowvert\bm{a}\Arrowvert_0+\osize\testsize,\usize\Arrowvert\bm{a}\Arrowvert_0+\qsize\testsize)),
\label{eq:predcomp}
\end{equation}
where $\Arrowvert\bm{a}\Arrowvert_0$ is the zero-norm measuring the number of non-zero elements in $\bm{a}$.
This is in contrast to the explicit computation by forming the full test kernel matrix, which would result in
\begin{equation}
O(\testsize\Arrowvert\bm{a}\Arrowvert_0)
\label{eq:naivepredcomp}
\end{equation}
complexity.

In the primal case the predictions can be computed as
\begin{equation*}
\widehat{\bm{R}}(\widehat{\bm{T}}\otimes\widehat{\bm{D}})\bm{w}
\end{equation*}
resulting in a computational complexity of
$O(\min(\vtsize\odsize\qdsize+\odsize\testsize,\usize\odsize\qdsize+\qdsize\testsize))$. For high-dimensional data ($\odsize\qdsize>>\lsize$) one will save substantial computation by using the dual model instead of the primal.

\subsection{Efficient learning}

\begin{table*}[t]
	\caption{Loss functions and their (sub)gradients and (generalized) Hessians. For binary classification losses marked * we assume that $y_i\in\{-1,1\}$, while for the rest $y_i\in\mathbb{R}$.}
\centering
	\begin{tabular}{|c|c|c|c|c|c|}
    \hline
	Method & $\lossfunction$ &$\bm{g}_i$ & $\bm{H}_{i,i}$
& $\bm{H}_{i,j}, i\neq j $\\
	\hline
	Ridge \cite{hoerl1970ridge} &  $\frac{1}{2}\sum_{i=1}^\lsize (p_i-y_i)^2$ & $p_i-y_i$ &  1 & 0\\
    L1-SVM* \cite{vapnik1995statistical} & $\sum_{i=1}^\lsize\max(0, 1-p_i \cdot y_i)$ & $
\begin{array}{cc}
-y_i &\textnormal{ if }p_i\cdot y_i <1\\
0&\textnormal{ otherwise}.
\end{array}$
 & $0$ & $0$\\
    L2-SVM* \cite{keerthi2005modified} &  $\frac{1}{2}\sum_{i=1}^\lsize\max(0, 1-p_i \cdot y_i)^2$ & 
$
\begin{array}{cc}
p_i-y_i &\textnormal{ if }p_i \cdot y_i <1\\
0&\textnormal{ otherwise}.
\end{array}$ &
$
\begin{array}{cc}
1 &\textnormal{ if } p_i\cdot y_i <1\\
0&\textnormal{ otherwise}.
\end{array}$ & $0$
 \\
    Logistic* \cite{walker1967estimation} & $\sum_{i=1}^\lsize\log(1+e^{-y_i p_i})$ & $- y_i(1+e^{y_i p_i})^{-1}$ & $e^{y_i p_i}(1+e^{y_i p_i})^{-2}$ & $0$ \\
RankRLS \cite{pahikkala2009efficient} & $\frac{1}{4}\sum_{i=1}^\lsize \sum_{j=1}^\lsize (y_i-p_i-y_j+p_j)^2$ & $\sum_{j=1}^\lsize(y_j-p_j)+n(p_i-y_i)$ & $n-1$ & $-1$ \\
\hline
	\end{tabular}
	\label{tb:losses}
\end{table*}

The regularized risk minimization problem provides a convex minimization problem, whose optimum can be located with (sub)gradient information. Next, we show how to efficiently compute the gradient of the objective function, and for twice differentiable loss functions Hessian-vector products, when using the Kronecker product kernel. For non-smooth losses or losses with non-smooth derivatives, we may instead consider subgradients and the generalized Hessian matrix (see e.g. \cite{keerthi2005modified,lin2008trust,teo2010bmr}). In this section, we will make use of the denominator-layout notation. 

First, let us consider making predictions on training data in the dual case. Based on previous considerations, we can compute 
\begin{equation}
\label{eqn:dualpredictions}
\bm{p}=\bm{R}(\bm{G}\otimes\bm{K})\bm{R}\transpose\bm{a}.
\end{equation}
The regularizer can be computed as $\frac{\lambda}{2}\Arrowvert f \Arrowvert_{\funspace}^2 = \frac{\lambda}{2}\bm{a}\transpose\bm{R}(\bm{G}\otimes\bm{K})\bm{R}\transpose\bm{a}$.

We use the following notation to denote the gradient and the Hessian (or a subgradient and/or generalized Hessian) of the loss function, with respect to $\bm{p}$:
\begin{equation*}
\bm{g}= \frac{\partial \lossfunction}{\partial \bm{p}} \textrm{ and }\bm{H}=\frac{\partial^2 \lossfunction}{\partial\bm{p}^2}
\end{equation*}
The exact form of $\bm{g}$ and $\bm{H}$ depends on the loss function, Table~\ref{tb:losses} contains some common choices in machine learning. While maintaining the full $\lsize\times\lsize$ Hessian would typically not be computationally feasible, for univariate losses the matrix is diagonal. Further, for many multivariate losses efficient algorithms for computing Hessian-vector products are known (see e.g. \cite{pahikkala2009efficient} for examples of efficiently decomposable ranking losses).

The gradient of the empirical loss can be decomposed via chain rule as 
\begin{equation*}
\frac{\partial \lossfunction}{\partial \bm{a}}=\frac{\partial \bm{p}}{\partial \bm{a}} \frac{\partial \lossfunction}{\partial \bm{p}} = \bm{R}(\bm{G}\otimes\bm{K})\bm{R}\transpose \bm{g}
\end{equation*}
The gradient of the regularizer can be computed as $\lambda\bm{R}(\bm{G}\otimes\bm{K})\bm{R}\transpose\bm{a}$. Thus, the gradient of the objective function becomes
\begin{equation}
\label{eqn:dualgradient}
\frac{\partial J}{\partial \bm{a}} = \bm{R}(\bm{G}\otimes\bm{K})\bm{R}\transpose(\bm{g} + \lambda\bm{a})
\end{equation}

The Hessian of $\lossfunction$ with respect to $\bm{a}$ can be defined as 
\begin{equation*}
\frac{\partial^2 \lossfunction}{\partial \bm{a}^2}=
\frac{\partial \bm{p}}{\partial \bm{a}} \frac{\partial}{\partial \bm{p}}  (\frac{\partial \bm{p}}{\partial \bm{a}} \frac{\partial \lossfunction}{\partial \bm{p}})
= \bm{R}(\bm{G}\otimes\bm{K})\bm{R}\transpose \bm{H} \bm{R}(\bm{G}\otimes\bm{K})\bm{R}\transpose
\end{equation*}
Hessian for the regularizer is defined as $\lambda\bm{R}(\bm{G}\otimes\bm{K})\bm{R}\transpose$. Thus the Hessian for the objective function becomes
\begin{equation*}
\frac{\partial^2 J}{\partial \bm{a}^2} = \bm{R}(\bm{G}\otimes\bm{K})\bm{R}\transpose(\bm{H}\bm{R}(\bm{G}\otimes\bm{K})\bm{R}\transpose+\lambda\idmatrix)
\end{equation*}

For commonly used univariate losses, assuming $\bm{p}$ has been computed for the current solution,
$\bm{g}$, and $\bm{H}$ can be computed in $O(\lsize)$ time. The overall cost of the loss, gradient and Hessian-vector product computations will then be dominated by the efficiency of the Kronecker-kernel vector product algorithm, resulting in $O(\qsize\lsize+\osize\lsize)$ time.

Conversely. in the primal case we can compute the predictions as $\bm{p}=\bm{R}(\bm{T}\otimes\bm{D})\bm{w}$, loss gradient as $(\bm{T}\transpose\otimes\bm{D}\transpose)\bm{R}\transpose\bm{g}$ and Hessian for the loss as $(\bm{T}\transpose\otimes\bm{D}\transpose)\bm{R}\transpose\bm{H}\bm{R}(\bm{T}\otimes\bm{D})$, (both w.r.t. $\bm{w}$). For the regularizer the corresponding values are $\frac{\lambda}{2}\bm{w}\transpose\bm{w}$, $\lambda\bm{w}$ and $\lambda\idmatrix$. The overall complexity of the loss, gradient and Hessian-vector product computations is $O(\min(\qsize\odsize\qdsize+\odsize\lsize,\osize\odsize\qdsize+\qdsize\lsize))$.

\subsection{Optimization framework}\label{sec:optimization}

\begin{algorithm}[t]
  \begin{algorithmic}[1]
    \Require $\bm{R}\in\{0,1\}^{\lsize\times\osize\qsize}$, $\bm{K}\in\mathbb{R}^{\osize\times\osize}$, $\bm{G}\in\mathbb{R}^{\qsize\times\qsize}$, $\bm{y}\in\mathbb{R}^{\lsize}$, $\lambda>0$
    \State $\dcoeffs\gets\bm{0}$
    \Repeat
    \State $\bm{p}\gets\bm{R}(\bm{G}\otimes\bm{K})\bm{R}\transpose\bm{a}$
    \State COMPUTE $\bm{H}$, $\bm{g}$
    \State SOLVE $(\bm{H}\bm{R}(\bm{G}\otimes\bm{K})\bm{R}\transpose+\lambda\idmatrix)\bm{x}=\bm{g} + \lambda\bm{a}$ \label{solvedual}
    \State $\bm{a}\gets\bm{a}-\delta\bm{x}$ ($\delta$ constant or found by line search)
  \Until convergence
  \caption{Dual Truncated Newton optimization for regularized risk minimization}
  \label{alg:truncnewtdual}
  \end{algorithmic}
\end{algorithm}

\begin{algorithm}[t]
  \begin{algorithmic}[1]
    \Require $\bm{R}\in\{0,1\}^{\lsize\times\osize\qsize}$, $\bm{D}\in\mathbb{R}^{\osize\times\odsize}$, $\bm{T}\in\mathbb{R}^{\qsize\times\qdsize}$, $\bm{y}\in\mathbb{R}^{\lsize}$, $\lambda>0$
    \State $\coeffs\gets\bm{0}$
    \Repeat
    \State $\bm{p}\gets\bm{R}(\bm{T}\otimes\bm{D})\coeffs$
    \State COMPUTE $\bm{H}$, $\bm{g}$
    \State \begin{varwidth}[t]{\linewidth} SOLVE $((\bm{T}\transpose\otimes\bm{D}\transpose)\bm{R}\transpose\bm{H}\bm{R}(\bm{T}\otimes\bm{D})+\lambda\idmatrix)\bm{x} =$ \\\hskip\algorithmicindent $(\bm{T}\transpose\otimes\bm{D}\transpose)\bm{R}\transpose\bm{g}+\lambda\coeffs$.
\end{varwidth}\label{solveprimal}
    \State $\bm{w}\gets\bm{w}-\delta\bm{x}$ ($\delta$ constant or found by line search)
  \Until convergence
  \caption{Primal Truncated Newton optimization for regularized risk minimization}
  \label{alg:truncnewtprimal}
  \end{algorithmic}
\end{algorithm}

As a simple approach that can be used for training regularized risk minimization methods with access to gradient and (generalized) Hessian-vector product operations, we consider a truncated Newton optimization scheme (Algorithms $\ref{alg:truncnewtdual}$ and $\ref{alg:truncnewtprimal}$; these implement approaches similar to \cite{keerthi2005modified,Chapelle2007primal}). This is a second order optimization method that on each iteration computes the Newton step direction $\partial^2 J(f)\bm{x}=\partial J(f)$ approximately up to a pre-defined number of steps for the linear system solver. The approach is well suited for Kronecker product kernel method optimization, since while computing the Hessians explicitly is often not computationally feasible, computing Hessian-vector products can be done efficiently using the generalized vec trick algorithm. 

Alternative optimization schemes can certainly be used, such as the Limited-memory BFGS algorithm \cite{byrd1995limited} or trust-region Newton optimization \cite{lin2008trust}. Further, for non-smooth losses such as the hinge loss, methods tailored for non-smooth optimization such as the bundle method \cite{teo2010bmr} should be used rather than the truncated Newton method. Such works are orthogonal to our work, as the proposed algorithm can be used to speed up computations for any optimization method that is based on (sub)gradient, and possibly (generalized) Hessian-vector product computations. Optimization methods that process either the edges or the model coefficients individually or in small batches (e.g. stochastic gradient descent \cite{bottou-2010}, coordinate descent \cite{chang2008coordinate}, SVM decomposition methods \cite{fan2005working}) may however not be a good match for the proposed algorithm, as the largest speed-up is gained when doing the computations for all of the training data in one single batch.

Each iteration of the Truncated Newton algorithm starts by computing the vector of training set predictions $\bm{p}$ for the current solution, after which
the gradient $\bm{g}$ and Hessian $\bm{H}$ can be computed (see Table~\ref{tb:losses} for examples on how to compute these
for several widely used loss functions). After this, the algorithm
solves approximately the linear system
\begin{equation*}
\frac{\partial^2 J}{\partial \bm{a}^2}\bm{x}=\frac{\partial J}{\partial \bm{a}},
\end{equation*}
to find the next direction of descent (for the primal case we solve analogously $\frac{\partial^2 J}{\partial \bm{w}^2}\bm{x}=\frac{\partial J}{\partial \bm{w}}$). Here, we may use methods developed for solving linear systems, such as the 
Quasi-Minimal Residual iteration method \cite{freund1991qmr} used in our experiments. For the dual case, we may simplify this system of equations by removing the common term $\bm{R}(\bm{G}\otimes\bm{K})\bm{R}\transpose$ from both sides, resulting in the system
\begin{equation}
\label{eqn:simpledualsystem}
(\bm{H}\bm{R}(\bm{G}\otimes\bm{K})\bm{R}\transpose+\lambda\idmatrix)\bm{x}=\bm{g} + \lambda\bm{a}\;.
\end{equation}
While the optimization approaches can require a large number of iterations in order to converge, in practice often good predictive accuracy can be obtained using early stopping both in solving the system of linear equations on line~\ref{solvedual} of Algorithm~\ref{alg:truncnewtdual} and line~\ref{solveprimal} of Algorithm~\ref{alg:truncnewtprimal} and the outer truncated Newton optimization loops, as there is no need to continue optimization once the error of the prediction function stops decreasing on a separate validation set (see e.g. \cite{yao2007early,gerfo2008spectral,airola2010largescale}). 


\subsection{Complexity comparison}

The operations that dominate the computational cost for Algorithms \ref{alg:truncnewtdual} and \ref{alg:truncnewtprimal} are matrix-vector products of the form $\bm{R}(\bm{G}\otimes\bm{K})\bm{R}\transpose\bm{v}$ and $(\bm{T}\transpose\otimes\bm{D}\transpose)\bm{R}\transpose\bm{v}$. A natural question to ask is, how much does the proposed fast generalized vec Trick algorithm (Algorithm~\ref{alg:bgkbvdetailed}) speed these operations up, compared to explicitly forming the Kronecker product matrices (here denoted the 'Baseline' method).

We consider three different settings:
\begin{itemize}
\item Independent: all edges in the training set are mutually disjoint, indicating that $\lsize=\osize=\qsize$.
\item Dependent: training edges may share start or end vertices or both with each other, that is $\max(\osize,\qsize)<\lsize<\osize\qsize$.
\item Complete: the training set is a complete bipartite graph, that is $\lsize=\osize\qsize$.
\end{itemize}
'Dependent' is the default setting assumed in this paper, while 'Independent' and 'Complete' are extreme cases of the setting.

In Table~\ref{tb:dual_comparison} we provide the comparison for the dual case. In the 'Independent' -case the complexity of the proposed algorithm reduces to that
of the baseline method, since there are vertices shared between the edges, that the algorithm could make use of. However, beyond that the complexity of
the baseline approach grows quadratically with respect to the number of edges, while the complexity of the proposed method grows linearly with respect to the
numbers of start vertices, end vertices, and edges. In Table~\ref{tb:primal_comparison} we provide the comparison for the primal case. Again, the complexity is the same
for the 'Independent' case, but the proposed method is much more efficient for the other settings assuming $\osize << \lsize$ and $\qsize<<\lsize$.
Finally, we note that  for both the primal and the dual settings, for the 'Complete' case where $\bm{R}=\bm{I}$, the multiplication could as efficiently be computed using the 'VecTrick' - method implied by Roth's column lemma (\ref{veclemma}).
To conclude, we observe that the proposed algorithm considerably outperforms previously known methods, if $\max(\osize,\qsize)<<\lsize<\osize\qsize$.

\begin{table}[t]
\caption{Dual case: complexity comparison of the proposed method and the baseline approach that constructs the Kronecker product kernel matrix explicitly.}
\centering
\begin{tabular}{|l|ll|}
\hline
& Baseline & Proposed\\
\hline
Independent & $O(\lsize^2)$ & $O(\lsize^2)$\\
Dependent & $O(\lsize^2)$ & $O(\qsize\lsize+\osize\lsize)$\\
Complete & $O(\osize^2\qsize^2)$ & $O(\osize^2\qsize+\osize\qsize^2)$\\
\hline
\end{tabular}
\label{tb:dual_comparison}
\end{table}

\begin{table}[t]
\caption{Primal case: complexity comparison of the proposed method and the baseline approach that constructs the Kronecker data matrix explicitly.}
\centering
\begin{tabular}{|l|ll|}
\hline
& Baseline & Proposed\\
\hline
Independent & $O(\lsize \odsize\qdsize)$ & $O(\lsize \odsize\qdsize)$\\
Dependent & $O(\lsize \odsize\qdsize)$  & $O(\min(\qsize\odsize\qdsize+\odsize\lsize,\osize\odsize\qdsize+\qdsize\lsize))$\\
Complete & $O(\osize\qsize\odsize\qdsize)$  & $O(\min(\osize\odsize\qdsize+\osize\qsize\qdsize,\odsize\qdsize\qsize+\odsize\osize\qsize))$\\
\hline
\end{tabular}
\label{tb:primal_comparison}
\end{table}

\section{Case studies: ridge regression and support vector machines}

Next, as an example on how our framework may be applied we consider two commonly used losses, deriving fast ridge regression and support vector machine training algorithms for the Kronecker product kernel.

\subsection{Kronecker ridge regression}

As our first example learning algorithm that uses the above defined concepts, let us consider the well-known ridge regression method (aka regularized least-squares, aka least-squares SVM) \cite{hoerl1970ridge,poggio2003mathematics}. For this specific case, the
resulting algorithm is simpler than the general optimization framework of Algorithms~\ref{alg:truncnewtdual} and~\ref{alg:truncnewtprimal}, as the linear system appearing in the algorithms needs to be solved only once.

In this case, $\lossfunction=\frac{1}{2}(\bm{p}-\bm{y})\transpose(\bm{p}-\bm{y})$, $\bm{H}=\bm{I}$ and $\bm{g} = \bm{p}-\bm{y}$ (see Table~\ref{tb:losses}). For the dual case, based on (\ref{eqn:dualgradient}) it can be observed that the full gradient for the ridge regression is
\begin{equation*}
\bm{R}(\bm{G}\otimes\bm{K})\bm{R}\transpose (\bm{p}-\bm{y} + \lambda\bm{a})
\end{equation*}
Writing $\bm{p}$ open (\ref{eqn:dualpredictions}) and setting the gradient to zero, we recover a linear system
\begin{equation*}
\bm{R}(\bm{G}\otimes\bm{K})\bm{R}\transpose(\bm{R}(\bm{G}\otimes\bm{K})\bm{R}\transpose+ \lambda\bm{I})\bm{a} = \bm{R}(\bm{G}\otimes\bm{K})\bm{R}\transpose\bm{y}
\end{equation*}
A solution to this system can be recovered by canceling out $\bm{R}(\bm{G}\otimes\bm{K})\bm{R}\transpose$ from both sides, resulting in the linear system
\begin{equation*}
(\bm{R}(\bm{G}\otimes\bm{K})\bm{R}\transpose+ \lambda\bm{I})\bm{a} = \bm{y}
\end{equation*}
This system can be solved directly via standard iterative solvers for systems of linear equations. Combined with the generalized vec trick algorithm, this results in $O(\osize\lsize+\qsize\lsize)$ cost for each iteration of the method.

For the primal case, the gradient can be expressed as 
\begin{equation*}
(\bm{T}\transpose\otimes\bm{D}\transpose)\bm{R}\transpose(\bm{p}-\bm{y}) + \lambda \bm{w}
\end{equation*}
Again, writing $\bm{p}$ open and setting the gradient to zero, we recover a linear system
\begin{equation*}
(\bm{T}\transpose\otimes\bm{D}\transpose)\bm{R}\transpose\bm{R}(\bm{T}\otimes\bm{D})+\lambda\bm{I})\bm{w} = (\bm{T}\transpose\otimes\bm{D}\transpose)\bm{R}\transpose\bm{y}
\end{equation*}
Solving this system with a linear system solver together with the generalized Kronecker product algorithm, each iteration will require $O(\min(\osize\odsize\qdsize+\lsize\qdsize,\odsize\qdsize\qsize+\odsize\lsize))$ cost.

The naive approach of training ridge regression with the explicitly formed Kronecker product kernel or data matrix using standard solvers of systems of linear equations would result in $O(\lsize^2)$ and $O(\lsize\odsize\qdsize)$ cost per iteration for the dual and primal cases, respectively.

\subsection{Kronecker support vector machine}

Support vector machine is one of the most popular classification methods in machine learning. Two of the most widely used variants of the method are the so-called L1-SVM and L2-SVM (see Table~\ref{tb:losses}). Following works such as \cite{keerthi2005modified,Chapelle2007primal,lin2008trust} we consider the L2-SVM variant, since unlike L1-SVM it has an objective function that is differentiable and yields a non-zero generalized Hessian matrix, making it compatible with the presented Truncated Newton optimization framework.

The loss can be defined as $\lossfunction=\frac{1}{2}\sum_{i=1}^\lsize\max(0, 1-p_i \cdot y_i)^2$, where $y\in\{-1,1\}$. Let $\svecs=\{ i |y\cdot p(\bm{x}_i)< 1, i\in[n]\}$ denote the subset of training set that have non-zero loss for given prediction function. Further, let $\bm{S}_{\svecs}$ denote the index matrix corresponding to this index set. Now we can re-write the loss in a least-squares form as $\lossfunction=\frac{1}{2}(\bm{S}_{\svecs}\bm{p}-\bm{S}_{\svecs}\bm{y})\transpose(\bm{S}_{\svecs}\bm{p}-\bm{S}_{\svecs}\bm{y})$, its gradient as $\bm{g}=\bm{S}_{\svecs}\transpose(\bm{S}_{\svecs}\bm{p}-\bm{S}_{\svecs}\bm{y})$, and the Hessian as $\bm{H}=\bm{S}_{\svecs}\transpose\bm{S}_{\svecs}$, which is a diagonal matrix with entry $1$ for all members of $\svecs$, and $0$ otherwise.

By inserting $\bm{g}$ to (\ref{eqn:dualgradient}) and writing $\bm{p}$ open (\ref{eqn:dualpredictions}) we recover the gradient of the L2-SVM objective function, with respect to $\dcoeffs$, as
\begin{align}
\bm{R}(\bm{G}\otimes\bm{K})\bm{R}\transpose
(\bm{S}_{\svecs}\transpose(\bm{S}_{\svecs}\bm{R}(\bm{G}\otimes\bm{K})\bm{R}\transpose\dcoeffs-\bm{S}_{\svecs}\bm{y})
+\lambda\dcoeffs)
\end{align}
Inserting $\bm{g}$ and $\bm{H}$ to (\ref{eqn:simpledualsystem}), we see that $\frac{\partial J}{\partial \bm{a}\partial \bm{a}}\bm{x}=\frac{\partial J}{\partial \bm{a}}$ can be solved from:
\begin{equation*}
\label{dualsvmnewton}
(\bm{S}_{\svecs}\transpose\bm{S}_{\svecs}\bm{R}(\bm{G}\otimes\bm{K})\bm{R}\transpose+\lambda\idmatrix)\bm{x}=(\bm{S}_{\svecs}\transpose(\bm{S}_{\svecs}\bm{R}(\bm{G}\otimes\bm{K})\bm{R}\transpose\dcoeffs- \bm{S}_{\svecs}\bm{y})
+\lambda\dcoeffs)
\end{equation*}
The gradient and Hessian-vector products can be computed again at $O(\qsize\lsize+\osize\lsize)$ time. However, it should be noted that the support vector machine algorithm encourages sparsity, meaning that as the optimization progresses $\bm{a}$ may come to contain many zero coefficients, and the set ${\svecs}$ may shrink (at the optimal solution these two sets coincide, denoting the so-called support vectors). Thus given a new solution $\bm{a}$, we may compute $\bm{p}=\bm{R}(\bm{G}\otimes\bm{K})\bm{R}\transpose\dcoeffs$ and thus also the right-hand size of (\ref{dualsvmnewton}) in 
$O(\min(\qsize \Arrowvert \bm{a} \Arrowvert_{0}+\osize\lsize, \osize\Arrowvert \bm{a} \Arrowvert_{0} + \qsize\lsize)$ time, by removing the zero-coefficients from $\bm{a}$ and the corresponding columns from $\bm{R}^T$. Further, when solving (\ref{dualsvmnewton}) with Truncated Newton optimization, we can in each inner iteration of the method compute matrix-vector multiplication $\bm{S}_{\svecs}\bm{R}(\bm{G}\otimes\bm{K})\bm{R}\transpose\bm{v}$ in $O(\min(\qsize \Arrowvert \bm{v} \Arrowvert_{0}+\osize\arrowvert \svecs \arrowvert, \osize\Arrowvert \bm{v} \Arrowvert_{0} + \qsize\arrowvert \svecs \arrowvert)$ time.

In the primal case the gradient can be written as 
\begin{equation*}
(\bm{T}\transpose\otimes\bm{D}\transpose)\bm{R}\transpose\bm{S}_{\svecs}\transpose(\bm{S}_{\svecs}\bm{p}-\bm{S}_{\svecs}\bm{y})+\lambda\coeffs
\end{equation*}
A generalized Hessian matrix can be defined as
\begin{equation*}
\bm{H} = (\bm{T}\transpose\otimes\bm{D}\transpose)\bm{R}\transpose\bm{S}_{\svecs}\transpose\bm{S}_{\svecs}\bm{R}(\bm{T}\otimes\bm{D})+\lambda\idmatrix
\end{equation*}
The most efficient existing SVM solvers can be expected at best to scale quadratically with respect to training set size when solving the dual problem,
and linearly with respect to training set size and number of features when solving the primal case \cite{bottou2007svmsolvers,joachims2006training}.
Thus using the full Kronecker product kernel or data matrices, such solvers would have $O(\lsize^2)$ and $O(\lsize\odsize\qdsize)$ scaling for the dual and primal
cases, respectively.

\section{Experiments}

Computational costs for iterative Kronecker product kernel method training
depend on two factors: the cost of a single iteration, and
the number of iterations needed to reach a good solution.
The cost of a single iteration is dominated by 
gradient computations and Hessian-vector products. These can be efficiently
performed with Algorithm~\ref{alg:bgkbvdetailed}.
Further, as discussed at the end of Section~\ref{sec:optimization}, the number of
iterations needed can be limited via early stopping once a predictor that works well
on independent validation data has been reached.
For prediction times, the
dominating costs are the data matrix or kernel matrix multiplications
with the primal or (possibly sparse) dual coefficient vectors, both operations
that can be speeded up with Algorithm~\ref{alg:bgkbvdetailed}.

In the experiments, we demonstrate the following:
\begin{enumerate}
\item Parameter selection: we show how the hyperparameters of the methods
can be tuned using validation data.
\item Fast training: the combination of the proposed short-cuts and early stopping
allows orders of magnitude faster training than with regular kernel method solvers.
\item Fast prediction: proposed short-cuts allow orders of magnitude faster
prediction for new data, than with regular kernel predictors.
\item Accurate predictions: the predictive accuracy of Kronecker models is competitive
compared to alternative types of scalable graph learning methods.
\end{enumerate}

We implement the Kronecker methods in Python, they are made freely available under open source
license as part of the RLScore machine learning library\footnote{\url{https://github.com/aatapa/RLScore}} \cite{pahikkala2016rlscore}.
For comparison, we consider the LibSVM software \cite{chang2011libsvm}, which implements a highly efficient
support vector machine training algorithm \cite{fan2005working}. Further, we compare our approach to stochastic gradient descent and K-nearest neighbor based graph prediction methods.

\subsection{Data and general setup}

\begin{table}[H]
\centering
	\caption{Data sets}
\setlength{\tabcolsep}{3pt}
	\begin{tabular}{ l | c c c c c}
	   Data set & edges & pos. & neg. & start vertices & end vertices \\
       \hline 
        Ki &  93356 & 3200 & 90156 & 1421 & 156  \\
        GPCR & 5296 & 165 & 5131  & 223 & 95   \\
        IC & 10710 & 369 & 10341  & 210 & 204   \\
        E & 73870 & 732 & 73138 & 445 & 664  \\
        Checker & 250000 & 125000 & 12500 & 1000 & 1000\\
        Checker+ & 10240000 & 5120000 & 5120000 & 6400 & 6400\\
        \hline
 	\end{tabular}
	\label{tb:datasets}
\end{table}

As a typical bipartite graph learning problem, we consider the problem of
predicting drug-target interactions. From a data base of known drugs, targets and their
binary interactions, the goal is to learn a model that can for new previously unseen drugs
and targets predict whether they interact. 
We consider four drug-target interaction data sets, the GPCR, IC, E data sets \cite{yamanishi2008prediction},
and the Ki data \cite{Metz2011kinome}. The Ki-data is a
naturally sparse data set where the class information is available only for a subset of the edges. For the other
four drug-target data sets we use for the experiments a subset of all the possible drug-target interactions.
We use exactly the same preprocessing of the data as \cite{pahikkala2015dti}. 
The characteristics of the data sets are described in
Table~\ref{tb:datasets}. The edges in the data are $(\bm{d}_i,\bm{t}_j,y_h)$ triplets, consisting of a pair of vectors
encoding the features of the drug and the target (see \cite{pahikkala2015dti} for details of the features), and a
label having value $1$ if the drug and target interact, and $-1$ if they do not. Mapping the
problem to the terminology used in this paper, we may consider the drugs as start vertices, the targets as end vertices, and
the drug-target pairs with known interaction or non-interaction as edges.

Further, we generated two data sets using a variant of the Checkerboard simulation, a standard non-linear problem for benchmarking large scale SVM solvers (see e.g. \cite{mangasarian2001lagrangian}). In our simulation both start and end vertices have a single feature describing them, drawn from continuous uniform distribution in range $(0,100)$. The output assigned
to an edge $(d,t)$ is +1 whenever both $\left \lfloor{d}\right \rfloor$ and
$\left \lfloor{t}\right \rfloor$ are either odd or even, and -1 when one of them is odd and the other even.
When plotting the output against $d$ and $t$, this results in a highly non-linear checkerboard-type of pattern. Finally, random
noise is introduced to the data by flipping with $0.2$ probability the class of each edge. The numbers of start and end vertices
are same, and labels are assigned for $25\%$ of all the possible edges (i.e. $\osize=\qsize$ and $\lsize = 0.25\osize^2$).

In the zero-shot learning setting the aim is to generalize to such $(\bm{d}_i,\bm{t}_j)$-edges that are vertex disjoint with the training graph. Therefore, cross-validation with graph data is more complicated than for standard i.i.d. data, since this aim must be reflected in the train-test split (see e.g. \cite{park2012flaws,pahikkala2015dti}). The split is illustrated in Figure~\ref{fig:cv}. To ensure that the training and test graphs are vertex disjoint, the edges are divided into training and test edges as follows. First, both the start vertex-indices $[1,...,\osize]$ and end vertex-indices $[1,...,\qsize]$ are randomly divided into a training and test part. Then, an edge $(\bm{d}_i,\bm{t}_j,y_h)$ is assigned to training set if $i$ belongs to the training start vertex indices and $j$ belongs to the training end vertex indices. It is assigned to the test set if $i$ belongs to the test start vertex indices and $j$ belongs to test end vertex indices. Finally, the rest of the edges are discarded, that is, they belong neither to the training nor test part (the greyed out blocks in Figure~\ref{fig:cv}).
To tune the hyper-parameters without the risk of overfitting, one can split the data into training, validation and test parts in an analogous way to the above described train-test split. Combining this with cross-validation is in the literature known as nested cross-validation (for a detailed description of this approach for graph learning, see \cite{pahikkala2015dti}).

\begin{figure}
\begin{center}
\includegraphics[width=0.35\linewidth]{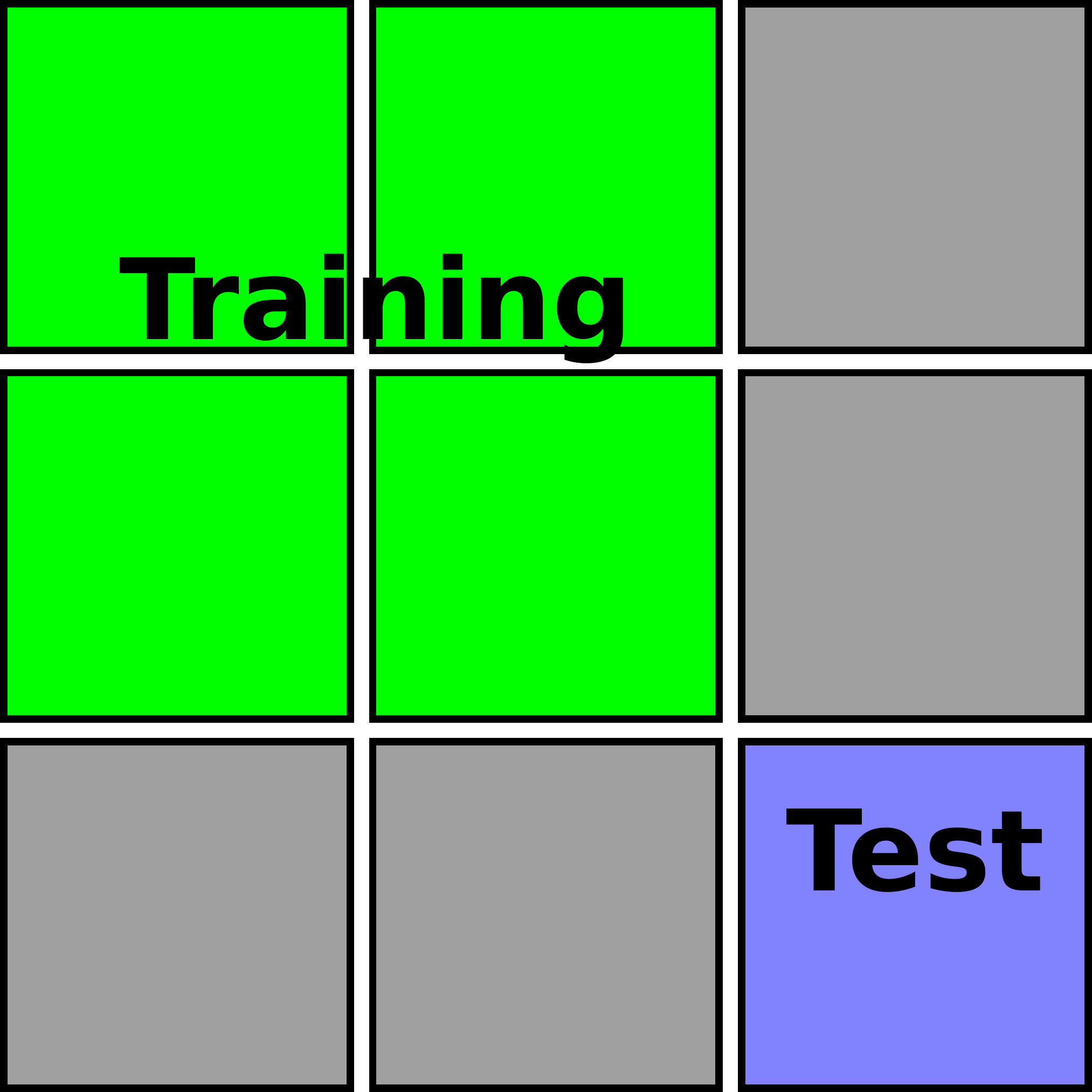}
\end{center}
\caption{Ninefold cross-validation. The matrix rows represent start vertices and columns end vertices,
the edges with known labels corresponds to a subset of the elements of this matrix.
The row- and column indices are both divided into three sets. The Cartesian product sets of the row and column sets index nine blocks of possible edges. On each round, the test
fold is formed from all the edges that belong to one of the blocks, and further have also known label.
The training folds are formed from the four blocks, that share no common rows or columns with the test fold.
Four blocks are left unused, as their edges connect vertices belonging to training and test folds.}
\label{fig:cv}
\end{figure}

Ridge regression is trained with the minimum residual iteration algorithm \cite{paige1975solution} implemented in the
scipy.sparse.linalg.minres package, while the inner optimization loop of the SVM training algorithm uses the scipy.sparse.linalg.qmr
implementing of quasi-minimal residual iteration \cite{freund1991qmr} (SciPy version 0.14.1), with $\delta=1$. 
Regularization parameters from the grid $[2^{-20},..., 2^{20}]$ were tested.  We restrict our plots  to values $[2^{-10}, 2^{-5}, 2^{0}, 2^{5}, 2^{10}]$, as these allow representing all the main
trends in the experiments. The classification performance on test data was measured with area under ROC curve (AUC).
The experiments were run on a desktop computer with Intel Core i7-3770 CPU (3.40GHz) running
Ubuntu Linux 15.04.

Early stopping experiments were run with the linear kernel, in order to allow comparing dual and primal optimization. The LibSVM comparison was done using the Gaussian kernel.
LibSVM does not directly support the Kronecker product kernel, this issue was resolved as follows.
If both start vertex and end vertex kernels are Gaussian with width $\gamma$, then
$\kernelfk(\bm{d}, \bm{d}')\kernelfk(\bm{t},\bm{t}')=e^{-\gamma\Arrowvert \bm{t}- \bm{t}'\Arrowvert^2}
e^{-\gamma\Arrowvert \bm{d}- \bm{d}'\Arrowvert^2}=e^{-\gamma\Arrowvert [\bm{t},\bm{d}]\transpose - [\bm{t'},\bm{d'}]\transpose \Arrowvert^2}$,
that is, the Kronecker product kernel is equal to using the Gaussian kernel with concatenated features of the start and end vertex.

\subsection{Choosing hyperparameters}

In this section, we set up guidelines for selecting the values of the hyperparameters based on experimental verification. The Kronecker algorithms have the following hyperparameters: the regularization parameter, the number of iterations for ridge regression, and both the number of inner and outer iterations for the SVM. We consider only the dual optimization,  the
observed behavior was very similar also for the primal case.


We run the optimization up to 100 iterations measuring regularized risk $J(f)$, and AUC on the test set
(in selected experiments with 500 iterations we found little improvements).
Ridge regression results are presented in Figure~\ref{fig:ridge_loss_perf}, while SVM results with inner
optimization loop terminated after 10 and 100 iterations are presented in Figures \ref{fig:svm10_loss_perf} and \ref{fig:svm100_loss_perf}.

In all the experiments the optimal test set AUC is obtained within tens
of iterations. Beyond this point reduction in regularized risk no longer translates into better predictions. Further, for
SVMs increasing the number of inner iterations from 10 to 100 allows achieving much faster decrease in regularized risk.
However, this comes at the cost of having to perform ten times more computation each iteration, and does not lead into
faster increase in test set AUC. Thus it can be observed, that rather than having to solve exactly the large optimization
problems corresponding to training Kronecker product kernel methods, often only a handful of iterations need to be performed in
order to obtain maximal predictive accuracy.

Several observations can be made based on the experiments. The regularized risk decreases quite quickly even if the SOLVE operation in Algorithms $\ref{alg:truncnewtdual}$ and $\ref{alg:truncnewtprimal}$ is terminated after a small number of iterations.
Moreover, early termination of SOLVE provides us more fine-grained control of the degree of fitting
(contrast Figures \ref{fig:svm10_loss_perf} and \ref{fig:svm100_loss_perf}, where for 100 inner iterations test performance in some cases starts immediately decreasing).

To conclude, the results suggest that on large data sets a good strategy is to start by setting the number of iterations to a small constant
(e.g. 10 inner and outer iterations), and increase these parameters only if the predictive accuracy
keeps increasing beyond the first iterations on independent validation set. Furthermore, we note that one could sidestep the selection of
regularization parameter $\lambda$ by setting it to a small constant and regularizing only with early stopping, however tuning
also $\lambda$ on separate validation data can sometimes yield even better performance.

\begin{figure}
\begin{center}
 \includegraphics[width=\columnwidth]{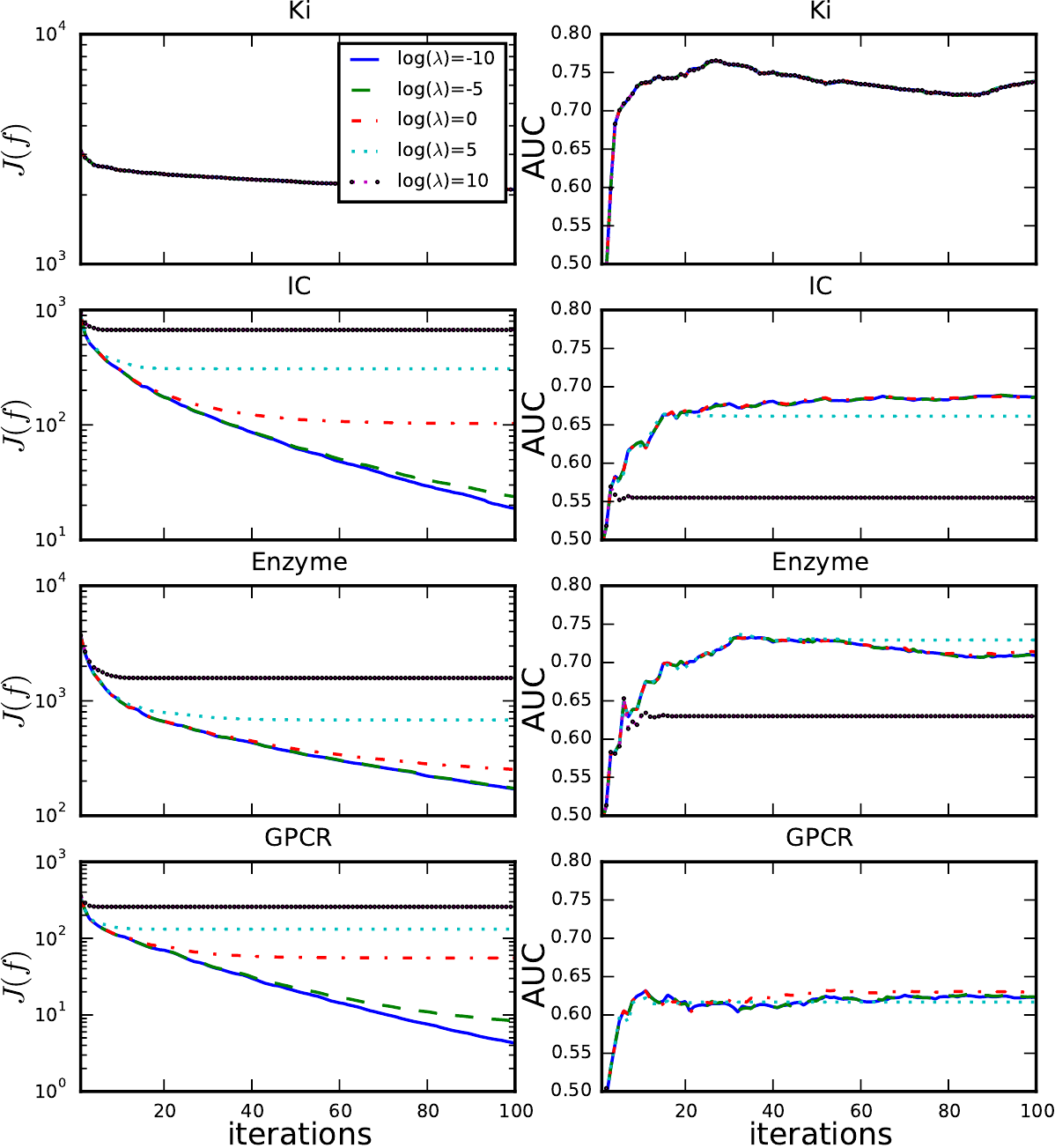}
 \caption{Ridge regression regularized risk (left) and test set AUC (right) as a function of optimization iterations.}
\label{fig:ridge_loss_perf}
\end{center}
\end{figure}

\begin{figure}
\begin{center}
 \includegraphics[width=\columnwidth]{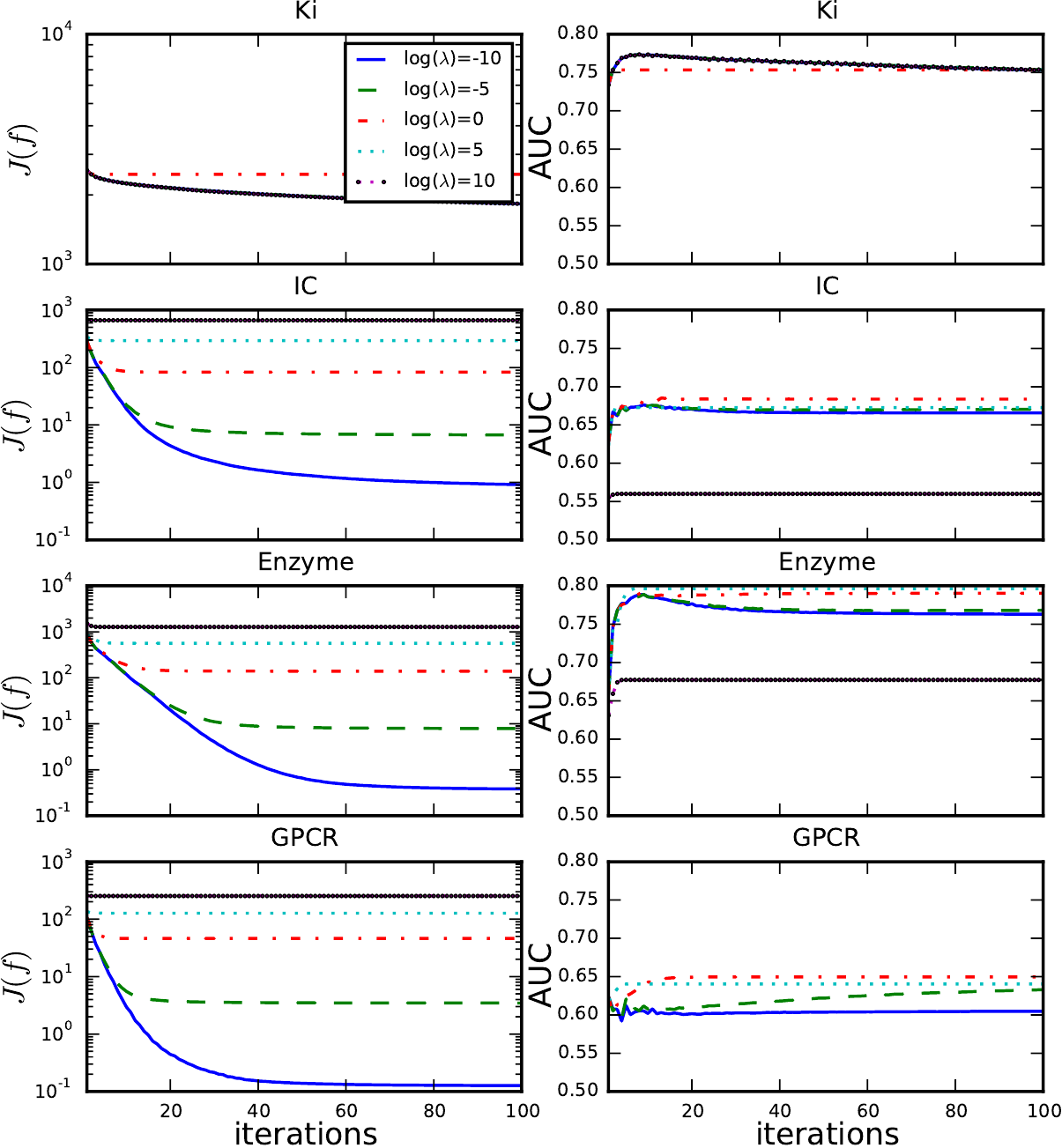}
 \caption{SVM with 10 inner iterations. Regularized risk (left) and test set AUC (right) as a function of outer optimization iterations.}
\label{fig:svm10_loss_perf}
\end{center}
\end{figure}

\begin{figure}
\begin{center}
 \includegraphics[width=\columnwidth]{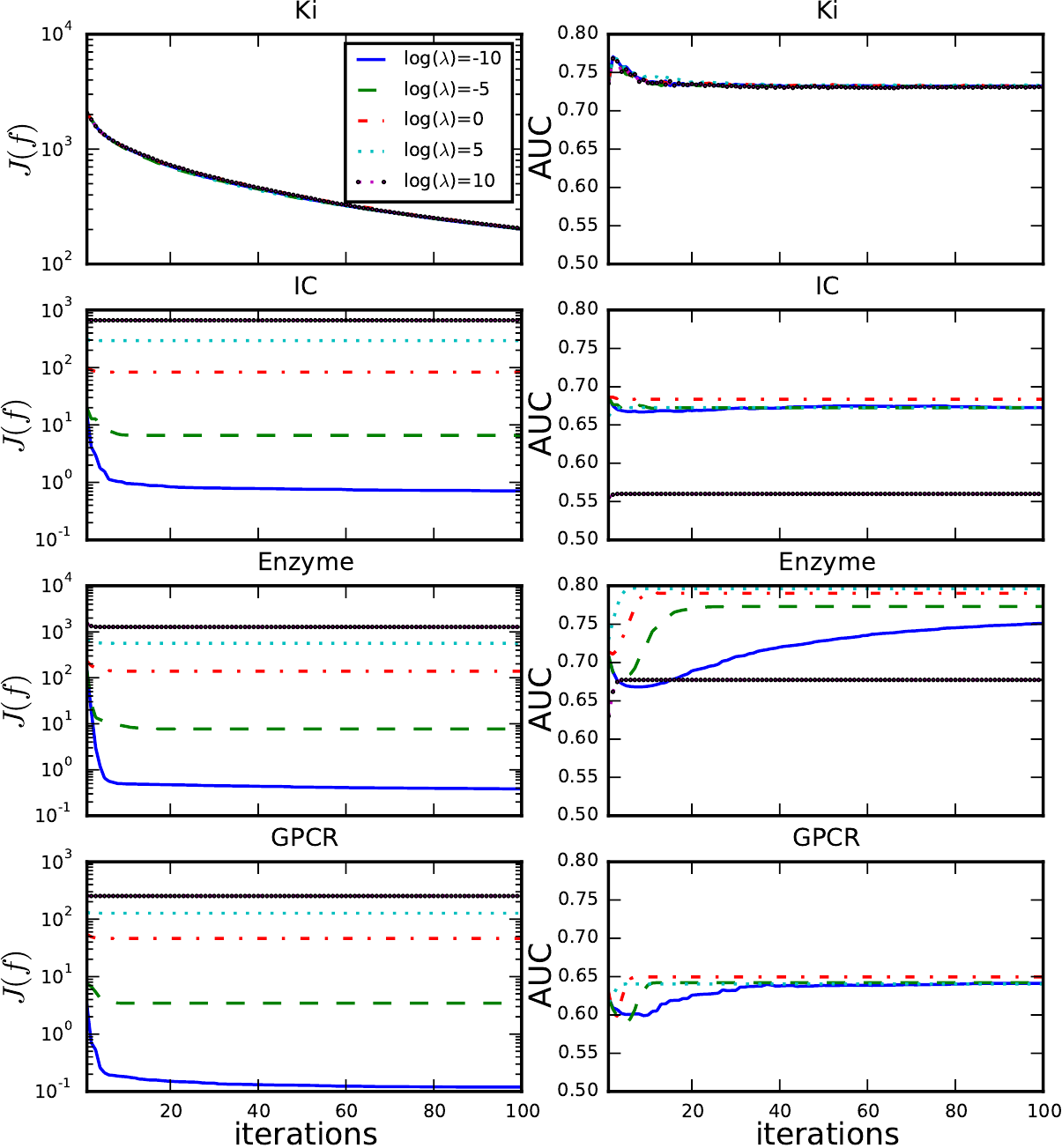}
 \caption{SVM with 100 inner iterations. Regularized risk (left) and test set AUC (right) as a function of outer optimization iterations.}
\label{fig:svm100_loss_perf}
\end{center}
\end{figure}

\begin{figure*}[t]
\begin{center}
\includegraphics[width=0.26\paperwidth]{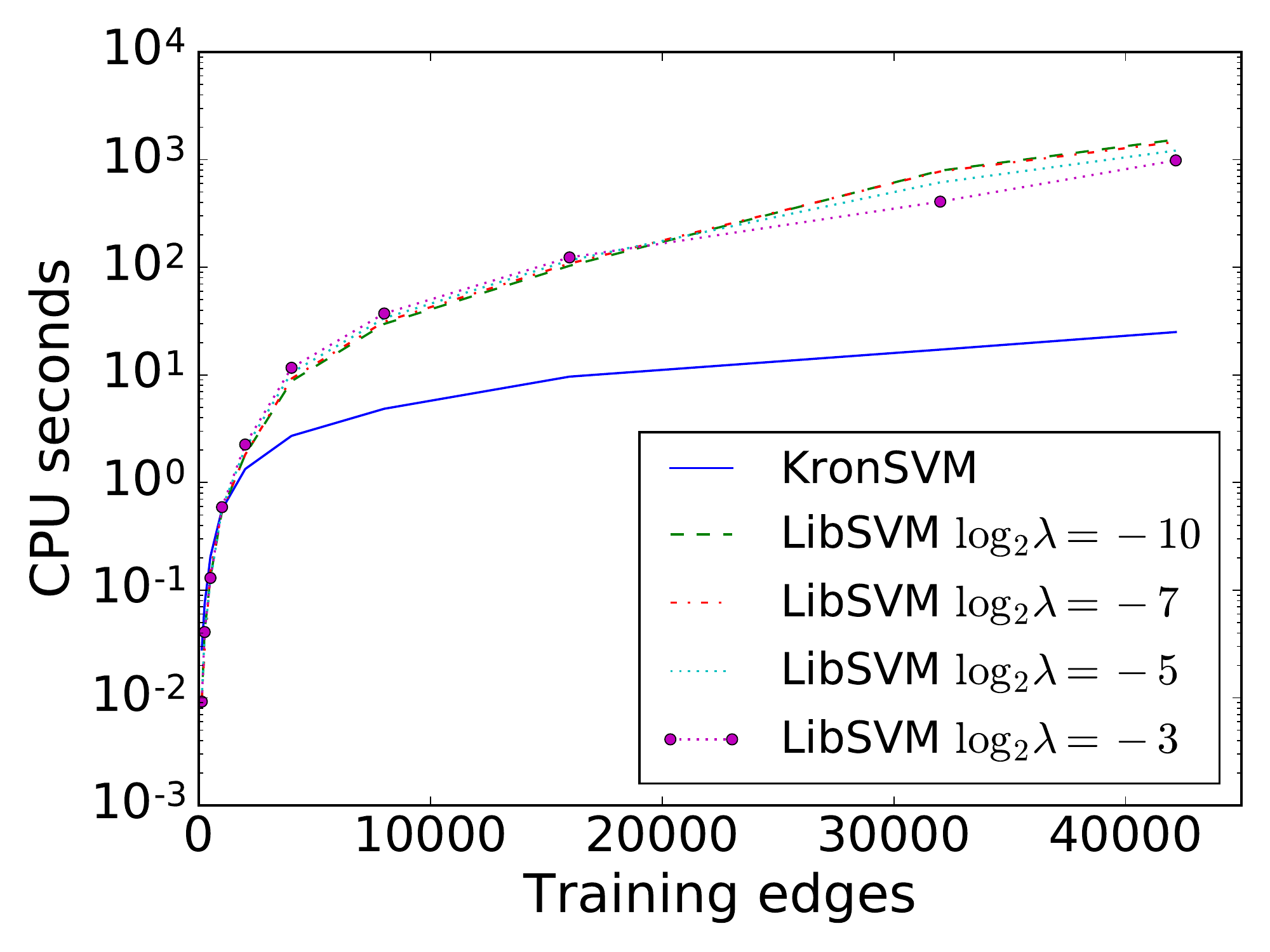}
\includegraphics[width=0.26\paperwidth]{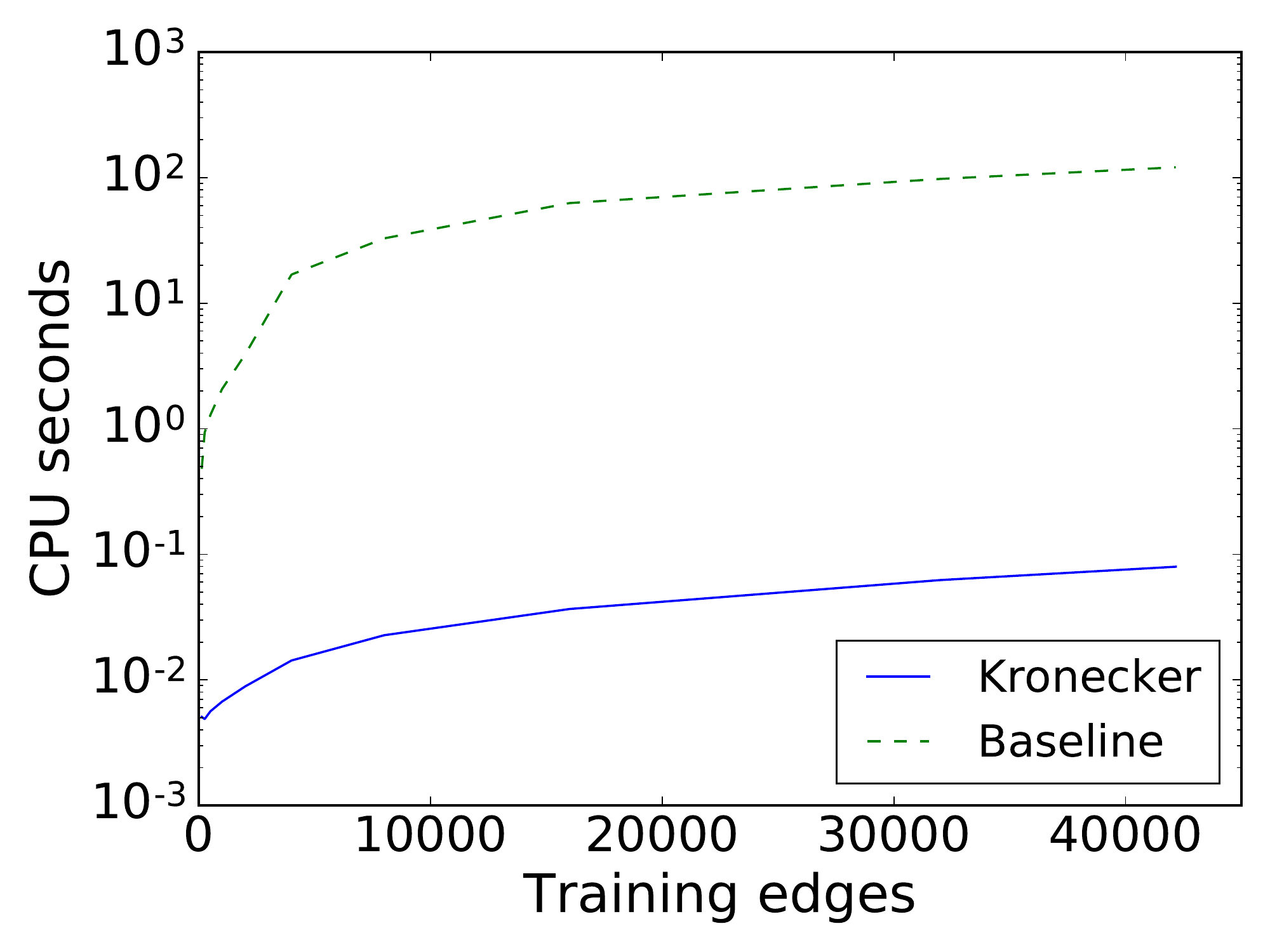}
\includegraphics[width=0.26\paperwidth]{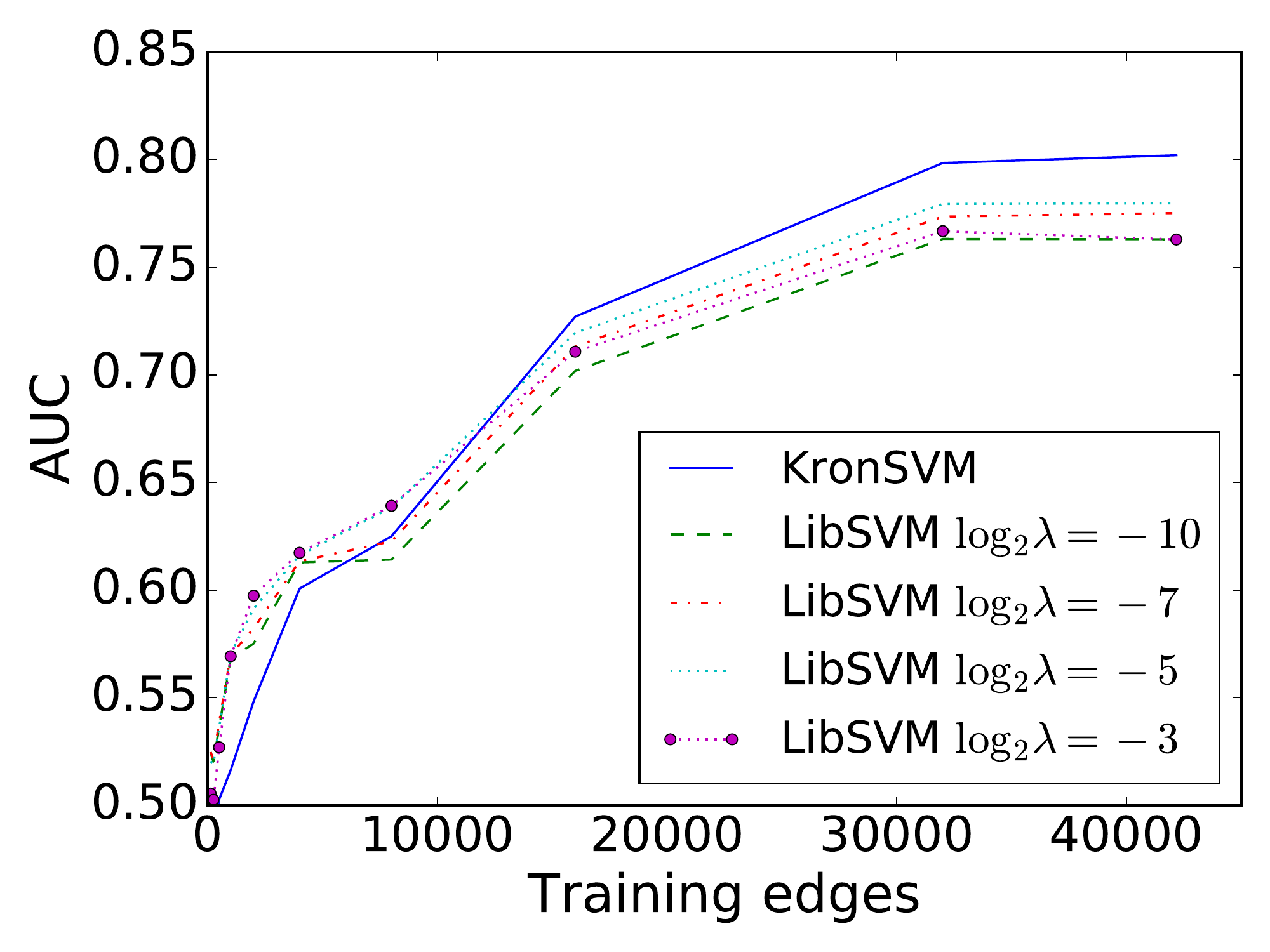}
 \caption{Drug-target experiment. Runtime comparison between KronSVM and LIBSVM (left). Prediction times for regular LibSVM decision function, and one that uses sparse Kronecker product shortcuts (middle). Cross-validated AUCs (right).}
\label{fig:libsvm_comparison}
\end{center}
\end{figure*}

\subsection{Training time}

In order to demonstrate the improvements in training speed that can be realized using the sparse Kronecker product algorithm,
we compare our Kronecker SVM algorithm to the LibSVM solver on the Ki-data set.
Based on preliminary tests, we set $\gamma=10^{-5}$, as this value produces informative (not too close to identity matrix,
or to matrix full of ones) kernel matrices for both the start and end vertices. For the Kronecker SVM, we perform 10 inner and
10 outer iterations with $\lambda=2^{-5}$. For LibSVM, we present the values for the grid $[2^{-7}, 2^{-5}, 2^{-3}, 2^{-1}]$, as
results for it vary more based on the choice of regularization parameter. We perform 9-fold cross-validation on the $K_i$ data,
for various training set sizes. In Figure~\ref{fig:libsvm_comparison} (left) we present the running times for
training the methods for different numbers of training edges, as well as the corresponding cross-validated AUCs. 
The KronSVM results, while similar as before, are not directly comparable to those in the earlier experiments due to
different kernel function being used.

As can be expected based on the computational complexity considerations, the KronSVM algorithm has superior scalability compared
to the regular SVM implemented in the LibSVM package. In the experiment on 42000 edges the difference is already 25 seconds versus
15 minutes. The LibSVM runtimes could certainly be improved for example by using earlier termination of the optimization.
Still, this would not solve the basic issue that without using computational
shortcuts such as the generalized Kronecker product algorithm (Algorithm~\ref{alg:bgkbvdetailed}) proposed in our work, one
will need to construct a significant part of the kernel matrix for the edges. For LibSVM, the scaling is roughly quadratic
in the number of edges, while for KronSVM it is linear. Considering the cross-validated
AUCs (Figure~\ref{fig:libsvm_comparison} (middle)), it can be observed that the AUC of the KronSVM with early stopping is
very much competitive with that of LibSVM.  To conclude, the proposed Kronecker product algorithm allows substantially
faster training for graph data than if one uses current state-of-the-art solvers, that are however not able to make use
of the shared structure.

\subsection{Prediction time}

Next, we show how the sparse Kronecker product algorithm can be used to accelerate predictions made for new data.
As in previous experiment, we train LibSVM on the Ki-data for varying data set sizes, with the training and test split
done as in the 9-fold cross-validation experiment. We use $\lambda=-5$, and the same kernel and kernel parameters as before.
We use the predictor learned by the SVM to make predictions for the 10000 drug-target pairs in the test set.

We compare the running times for two approaches for doing the predictions. 'Baseline' refers to the standard decision function
implemented in LibSVM. 'Kronecker', refers to implementation that computes the predictions using the sparse Kronecker
product algorithm. The 'Kronecker' method is implemented by combining the LibSVM code with additional code that
after training LibSVM, reads in the dual coefficients of the resulting predictor and generates a new predictor that
uses the shortcuts proposed in this paper. Both predictors are equivalent in the sense that they produce (within
numerical accuracy) exactly the same predictions.

The results are plotted in Figure~\ref{fig:libsvm_comparison} (right). For both methods the prediction time
increases linearly with the training set size (see equations (\ref{eq:predcomp}) and (\ref{eq:naivepredcomp})). 
However, the Kronecker method is more than 1000 times faster than Baseline,
demonstrating that significant speedups can be realized by using the sparse Kronecker product algorithm for computing
predictions.

\subsection{Large scale experiments with simulated data}

\begin{figure*}[t]
\begin{center}
\includegraphics[width=0.26\paperwidth]{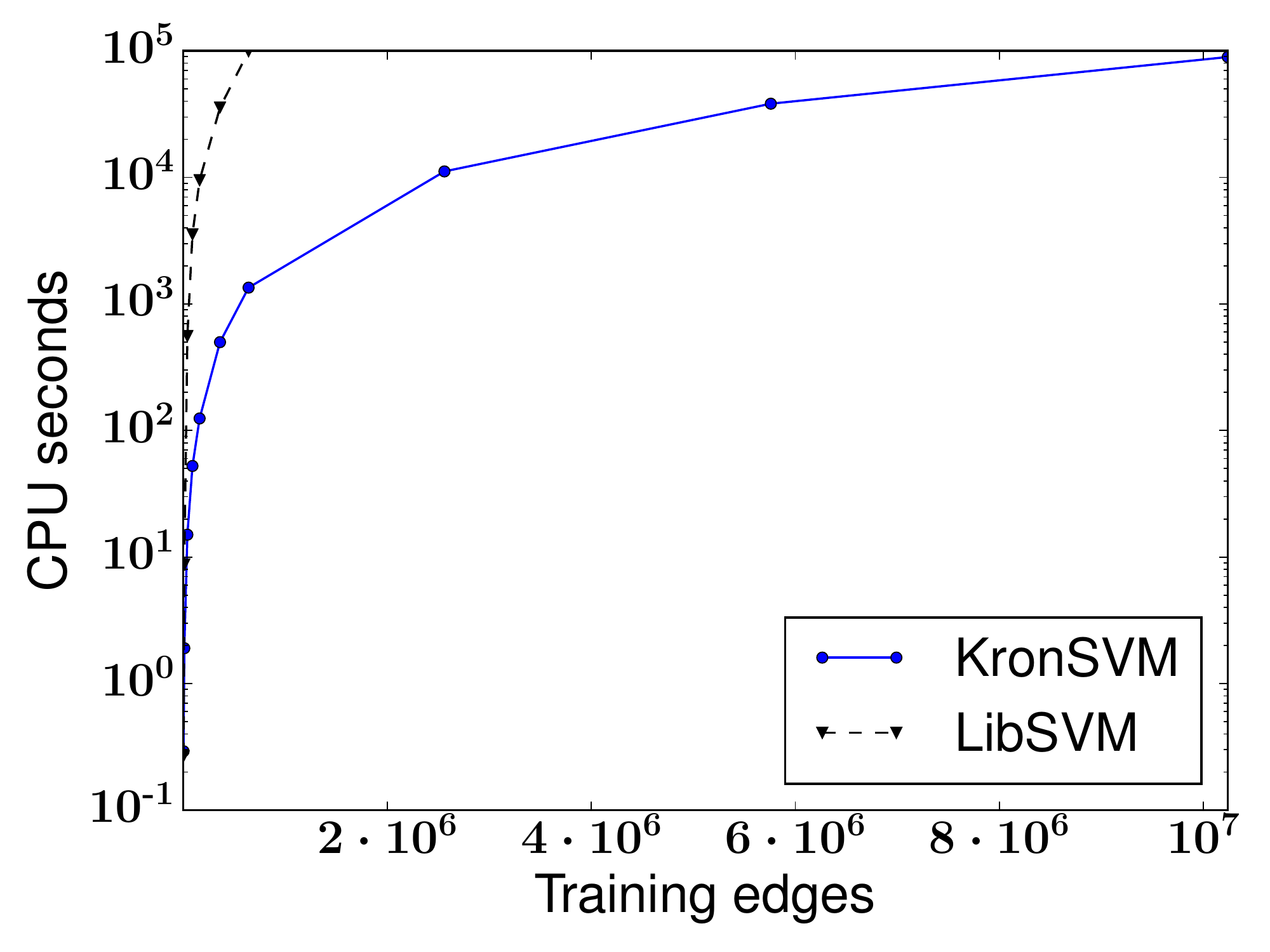}
\includegraphics[width=0.26\paperwidth]{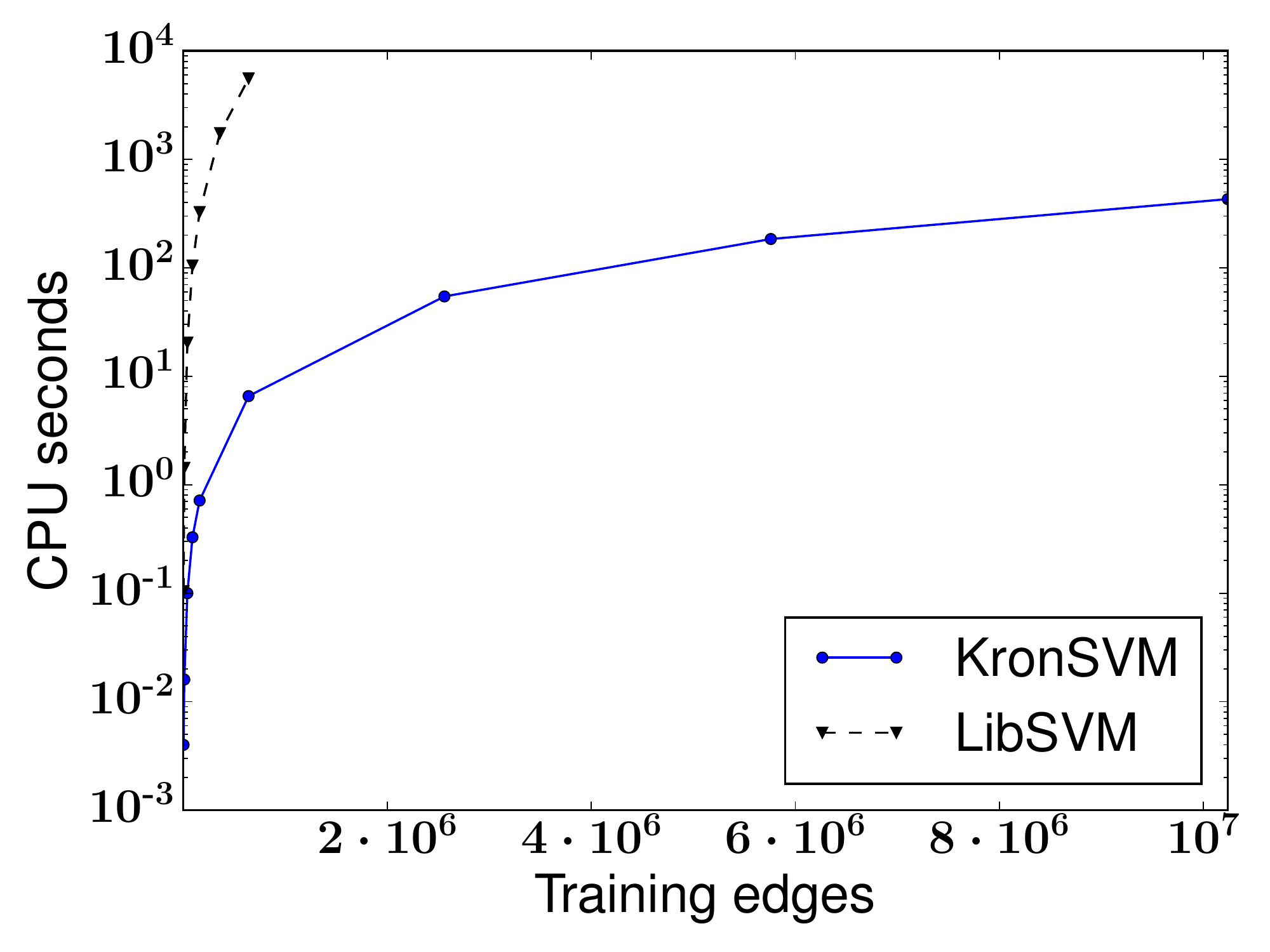}
\includegraphics[width=0.26\paperwidth]{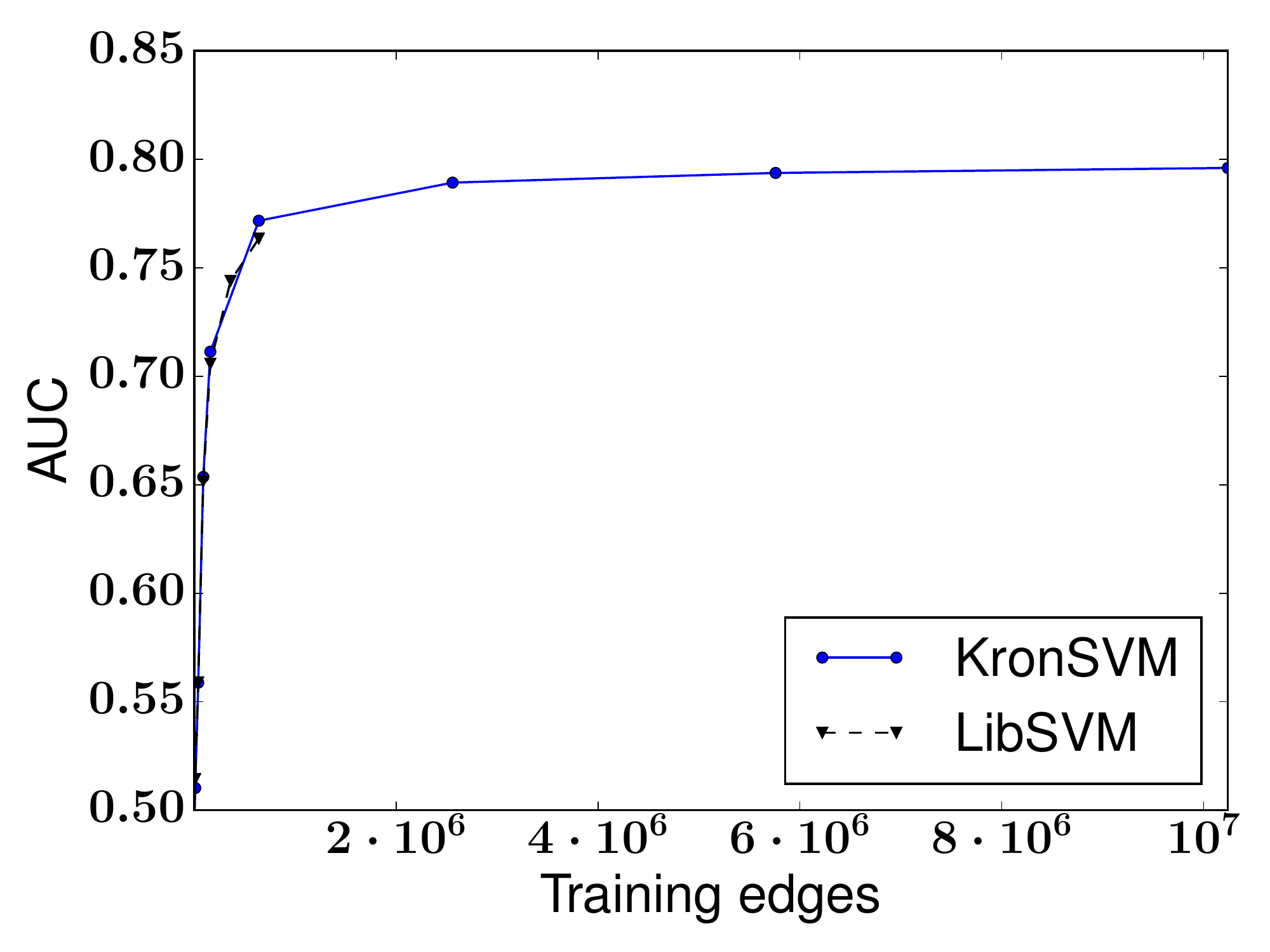}
 \caption{Checkerboard simulation. Runtime comparison between KronSVM and LIBSVM training (left) and prediction times (middle), and the corresponding test set AUCs (right).}
\label{fig:simulation_comparison}
\end{center}
\end{figure*}

In order to demonstrate the scaling of the proposed Kronecker product kernel methods to larger problem sizes than those
encountered in the drug-target data sets, we implemented scalability experiments on various sized subsets of the
checkerboard data. The training set is formed as follows. First, we generate the same number of start vertices
$\osize$ and end vertices $\qsize$, then generate  The independent test set is generated in the same way.
We use the Gaussian kernel, based on preliminary tests we set $\lambda=2^{-7}$ and $\gamma=1$, as parameters around this range allow learning
the simulated function. As before, KronSVM uses 10 inner and 10 outer iterations.
We train both KronSVM and LibSVM for varying data set sizes, up until they reach the point where training takes
more than 24 hours to complete. We also measure how long computing predictions takes for a test set of the same
size as the training set. Finally, we also measure test set AUC in order to show that the learners can really
learn to solve the simulated problem (note that due to random flipping of classes, even the optimal predictor
would have only $0.8$ AUC). The results are presented in Figure~\ref{fig:simulation_comparison}.

Again, KronSVM outperforms LibSVM by several orders of magnitude.
KronSVM can be trained in 24 hours on approximately 10 million edges (correspondingly, with 6400 start and end vertices).
The LibSVM experiments were discontinued after training on $64000$ edges (correspondingly $1600$ start and end vertices)
took more than 27 hours. For the same training set size, KronSVM can be trained in $23$ minutes. Regarding prediction times,
for LibSVM model trained with $64000$ edges it took $93$ minutes to compute predictions for a test set of same size,
whereas with the generalized Kronecker product shortcuts the same computations can be done in $7$ seconds.
KronSVM can with a model trained on 10 million edges, make predictions for a test set of also 10-million edges in $7$ minutes.
When trained on $10$ million edges, the KronSVM implementation used roughly 1.5 Gigabytes of memory.

\subsection{Comparison of graph learning methods}

Finally, we compare the proposed approach to alternative scalable graph learning approaches, such that can generalize to making predictions for edges that are vertex disjoint with the training graph. We consider the following baseline methods, that use as feature representation the concatenation $[\bm{d},\bm{t}]$ of the start and end vertex features:
\begin{itemize}
\item Linear model, stochastic gradient descent (SGD) \cite{bottou-2010}: We fit a linear model
$f(\bm{d},\bm{t})=\langle \bm{w}, [\bm{d},\bm{t}] \rangle$ to the data using
stochastic gradient descent over the edges. The approach is extremely scalable, and it is not necessary to load
all data into memory at once. Previously, \cite{menon2010loglinear} has used
sgd with logistic regression for cold start learning with recommender systems.
We consider both the logistic and the hinge loss.
\item K-nearest neighbors (KNN): KNN methods have enjoyed substantial popularity in graph
prediction applications such as biological interaction prediction \cite{laarhoven2013predicting}
and recommender systems \cite{Desrosiers2011}. The method can model highly non-linear functions and
can scale well especially to low-dimensional problems by using efficient data structures
for speeding up the neighborhood-search.
\end{itemize}
For the baseline methods, we use the implementations from the scikit-learn package \cite{scikit-learn}.
The sgd regularization parameter and KNN number of neighbors parameters are 
selected with internal 3-fold cross-validation. The number of stochastic gradient descent updates is set to $10^6$
(or at minimum one full pass through data). For KronSVM and KronRidge we set $\lambda=0.0001$, as the methods were
not very sensitive to amount of regularization when optimization was terminated early. KronSVM uses 10 inner
and 10 outer iterations on each data set, while KronRidge uses 100 iterations.
As before, for Checker data sets the Kronecker methods use Gaussian kernel with $\gamma=1$
for both vertex kernels, for other data sets we use linear vertex kernels.
For the drug-target interaction data sets we use 3x3 fold cross-validation as before, for the checkerboard data sets we generate separate test set with 6250000 edges.

\begin{table}[t]
\caption{AUCs for learning methods}
\centering
\begin{tabular}{l|llllll}
\hline
& Ki & GPCR & IC & E & Checker & Checker+\\
\hline
KronSVM & \bf{0.77} & 0.62 & 0.68 & \bf{0.79} & \bf{0.73} & \bf{0.80} \\
KronRidge & 0.75 & 0.62 & \bf{0.69} & 0.72 & 0.71 & 0.79 \\
SGD hinge & 0.76 & 0.60 & 0.63 & 0.72 & 0.50 & 0.50 \\
SGD logistic & 0.76 & \bf{0.67} & 0.64 & 0.72 & 0.50 & 0.50  \\
KNN & 0.71 & 0.63 & 0.68 & 0.70 & 0.68 & 0.79 \\
\hline
\end{tabular}
\label{tb:comparison_aucs}
\end{table}

\begin{table}[t]
\caption{CPU runtime in seconds for learning methods}
\centering
\begin{tabular}{l|llllll}
\hline
& Ki & GPCR & IC & E & Checker & Checker+\\
\hline
KronSVM & 298 & 7 & 14 & 191 & 371 & 89500 \\
KronRidge & 68 & 4 & 8 & 60 & 188 & 45150 \\
SGD hinge & 57 & 14 & 19 & 41 & 17 & 130 \\
SGD logistic & 63 & 19 & 25 & 57 & 17 & 128\\
KNN & 5554 & 43 & 267 & 26457 & 46 & 1756 \\
\hline
\end{tabular}
\label{tb:comparison_runtimes}
\end{table}

In Table~\ref{tb:comparison_aucs} we present the AUCs for the compared methods.
Overall, KronSVM performs the best, yielding the best performance
on Ki, E, Checker and Checker+ data sets. 
KronRidge performs slightly worse than KronSVM on most data sets, possibly due
to the fact that the squared loss is not as well suited for classification as the squared hinge
loss. While the linear SGD methods provide a surprisingly  competitive baseline, they do not
quite reach the performance of the best methods, and it is impossible for them to
outperform random guessing for the Checker data sets, due to the non-linearity of the task.
The KNN method performs reliably over all the data sets, but does not yield the best performance
on any of them.

Regarding the runtimes presented in Table~\ref{tb:comparison_runtimes}, KronSVM and KronRidge
strike a good balance between accurate predictions and the ability to scale to all the data sets.
The linear SGD methods provide overall the best scalability, but at the cost of not being able to model nonlinearities
in the data. The scalability of the KNN depends on the dimensionality of the data; on Checker and
Checker+ the method excels because there are only 2 features, whereas on Ki, IC, E, and GPCR, the method is not
competitive.

\section{Conclusion}

In this work, we have proposed a generalized Kronecker product algorithm. A simple optimization
framework is described in order to show how the proposed algorithm can be used to develop
efficient training algorithms for pairwise kernel methods. Both computational complexity
analysis and experiments show that the resulting algorithms can provide order of
magnitude improvements in computational efficiency both for training and making predictions,
compared to existing kernel method solvers. Further, we show that the approach compares favorably
to other types of graph learning methods. The implementations for the generalized Kronecker product 
algorithm, as well as for Kronecker product kernel learners are made freely available
under open source license.

\bibliographystyle{IEEEtran}
\bibliography{myBibliography}
\end{document}